\newtheorem{theorem}{Theorem}
\newtheorem{lemma}{Lemma}
\newtheorem{corollary}{Corollary}
\newtheorem{claim}{Claim}
\newtheorem{proposition}{Proposition}
\newcommand{\dotprod}[2]{\langle #1,#2 \rangle}
\newcommand{\E}{\mathbb{E}}
\newcommand{\Eh}{\widehat{\E}}
\renewcommand{\P}{\mathbb{P}}
\newcommand{\Z}{\mathcal{Z}}
\newcommand{\D}{\mathcal{D}}
\newcommand{\F}{\mathcal{F}}
\newcommand{\G}{\mathcal{G}}
\newcommand{\N}{\mathcal{N}}
\newcommand{\Rademacher}{\mathfrak{R}}
\newcommand{\Rademacherh}{\widehat{\Rademacher}}
\def\aa{\mbox{\boldmath $a$}}
\newcommand{\BB}{\mbox{\boldmath $B$}}
\newcommand{\dd}{\mbox{\boldmath $d$}}
\newcommand{\ee}{\mbox{\boldmath $e$}}
\renewcommand{\P}{\mbox{$\mathbb P$}}
\newcommand{\one}{\mbox{\boldmath $1$}}
\newcommand{\R}{\mbox{\boldmath $R$}}
\newcommand{\si}{\mbox{\boldmath $\sigma$}}
\newcommand{\VV}{\mbox{\boldmath $V$}}
\newcommand{\vv}{\mbox{\boldmath $v$}}
\newcommand{\w}{\mbox{\boldmath $w$}}
\newcommand{\W}{\mbox{\boldmath $W$}}
\newcommand{\X}{\mathcal{X}}
\newcommand{\x}{\mbox{\boldmath $x$}}
\def\BB{\mbox{$\mathbf B$}}
\def\D{\mbox{$\mathcal D$}}
\def\F{\mbox{$\mathcal F$}}
\def\MM{\mbox{$\mathbf M$}}
\def\si{\mbox{$\sigma$}}
\def\uu{\mbox{$\mathbf u$}}
\def\VV{\mbox{$\mathbf V$}}
\def\xx{\mbox{$\mathbf x$}}
\def\u{\mbox{$\mathbf u$}}
\def\WW{\mbox{$\mathbf W$}}
\def\zz{\mbox{$\mathbf z$}}
\def\R{\mbox{$\mathbb R$}}
\def\Y{\mbox{$\mathcal Y$}}
\def\zer{\mbox{\boldmath $0$}}
\def\E{ \mathbb{E}}
\newcommand{\cm}[1]{}
\newcommand{\lm}[1]{}
\newcommand{\om}[1]{}
\newcommand{\norm}[1]{\left\Vert#1\right\Vert}
\begin{document}
\title{Understanding Weight Normalized Deep Neural Networks with Rectified Linear Units}
	\author[1]{Yixi Xu}
\author[1]{Xiao Wang}
\affil[ ]{xu573@purdue.edu, wangxiao@purdue.edu }
\affil[1]{Department of Statistics, Purdue University, West Lafayette, IN 47907, USA}
\date{}
\maketitle
\begin{abstract}
This paper presents a general framework for norm-based capacity control for $L_{p,q}$ weight normalized deep neural networks. We establish the upper bound on the Rademacher complexities of this family. With an $L_{p,q}$ normalization where $q\le p^*$ and $1/p+1/p^{*}=1$, we discuss properties of a width-independent capacity control, which only depends on the depth by a square root term. We further analyze the approximation properties of $L_{p,q}$ weight normalized deep neural networks. In particular, for an $L_{1,\infty}$ weight normalized network, the approximation error can be controlled by the $L_1$ norm of the output layer, and the corresponding generalization error only depends on the architecture by the square root of the depth.
%
\end{abstract}

\section{Introduction}
During the past decade, deep neural networks (DNNs) have demonstrated an amazing performance in 
solving many complex artificial intelligence tasks such as object recognition and   
identification, text understanding and translation, question answering, and more \cite{goodfellow2016deep}. The capacity of {\it unregularized} fully connected DNNs, as a function of the network size and depth, is fairly well understood \cite{anthony2009neural,bartlett1998sample,shalev2014understanding}. 
By bounding the $L_2$ norm of the incoming weights of each unit,  \cite{salimans2016weight} is able to accelerate the convergence of stochastic gradient descent optimization across applications in supervised image recognition, generative modeling, and deep reinforcement learning. However, theoretical investigations on such networks are less explored in the literature, and a few exceptions are \cite{bartlett1998sample,bartlett2017spectrally,pmlr-v75-golowich18a,neyshabur2018a,neyshabur2015norm,sun2016depth}. There is a central question waiting for an answer: Can we bound the capacity of fully connected DNNs with bias neurons by weight normalization alone, which has the least dependence on the architecture?

In this paper, we focus on networks with rectified linear units (ReLU) and study a more general weight normalized deep neural network (WN-DNN), 
which includes all layer-wise $L_{p,q}$ weight normalizations.  In addition, these networks have a bias neuron per hidden layer, while prior studies \cite{bartlett1998sample,bartlett2017spectrally,pmlr-v75-golowich18a,neyshabur2018a,neyshabur2015norm,sun2016depth} either exclude the bias neuron, or only include the bias neuron in the input layer, which differs from the practical application.  We establish the upper bound on the Rademacher complexities of this family and study the theoretical properties of WN-DNNs in terms of the approximation error.

 We first examine how the  $L_{p,q}$ WN-DNN architecture  influences their generalization properties. Specifically, for $L_{p,q}$ normalization where $q\le p^*$ and $1/p+1/p^{*}=1$, we obtain a complexity bound that is independent of width and only has a square root dependence on the depth. To the best of our knowledge, this is the first theoretical result for the fully connected DNNs including a bias neuron for each hidden layer in terms of generalization. We will demonstrate later that it is nontrivial to extend the existing results to the DNNs with bias neurons. Even excluding the bias neurons, existing generalization bounds for DNNs depend on either width or depth logarithmically \cite{bartlett2017spectrally}, polynomially\cite{pmlr-v75-golowich18a,neyshabur2018a}, or even exponentially \cite{neyshabur2015norm,sun2016depth}. Even for \cite{bartlett2017spectrally}, the logarithmic dependency is not always guaranteed, as  the margin bound is \[O\left(\log (\max{\dd})/\sqrt{n}\prod\limits_{i=1}^k \norm{\WW_i}_{\si} \left(\sum\limits_{i=1}^k \norm{\WW^T_i-\MM^T_i}^{2/3}_{2,1}/\norm{\WW_i}_{\si}^{2/3}\right)^{3/2} \right),\] where $\norm{\cdot}_{\si}$ is the spectral norm, and $\MM_i$ is a collection of predetermined reference matrix. The bound will worsen, when the $\WW_i$ moves farther from $\MM_i$. For example, if  \[\norm{\WW^T_i-\MM^T_i}_{2,1}/\norm{\WW_i}_{\si}\ge A_0\] for some constant $A_0$, then the above bound will rely on the network size by $O\left(\log (\max{\dd})k^{3/2} \right)$.
 
We also examine the approximation error of WN-DNNs. It is shown that the $L_{1,\infty}$ WN-DNN is able to approximate any Lipschitz continuous function arbitrarily well by increasing the norm of its output layer and growing its size. Early work on neural network approximation theory includes the universal approximation theorem  \cite{cybenko1989approximation,hornik1991approximation,pinkus1999approximation}, indicating that a fully connected network with a single hidden layer can approximate any continuous functions.
More recent work expands the result of shallow networks to deep networks with an increased interest in the expressive power of deep networks  especially for some families of "hard" functions \cite{arora2018understanding,Eldan2016ThePO, liang2016deep,pmlr-v70-safran17a,pmlr-v49-telgarsky16,yarotsky2017error}. For instance, \cite{pmlr-v49-telgarsky16} shows that for any positive integer $l$, there exist neural networks with $\Theta(l^3)$ layers and $\Theta(1)$ nodes per layer, which can not be approximated by networks with $\Theta(l)$ layers unless they possess $\Omega(2^l)$ nodes. These results on the other hand request for an artificial neural network of which the generalization bounds grow slowly with depth and even avoid explicit dependence on depth. 

The contributions of this paper are summarized as follows. \begin{enumerate}
\item  We extend the $L_{2,\infty}$ weight normalization \cite{salimans2016weight} to the more general $L_{p,q}$ WN-DNNs and relate these classes to those represented by unregularized DNNs. \item We include a bias node not only in the input layer but also in every hidden layer. As discussed in Claim \ref{claim:1}, it is nontrivial to extend prior research to study this case.
\item  We study the Rademacher complexities of WN-DNNs. Especially,  with any $L_{p,q}$ normalization satisfying that $q\le p^*$, we have a capacity control that is independent of the width and depends on the depth by $O(\sqrt{k})$.
\item  We analyze the approximation property of $L_{p,q}$ WN-DNNs and further show the theoretical advantage of  $L_{1,\infty}$ WN-DNNs. 
\end{enumerate}
The paper is organized as follows.  In Section 2, we define the $L_{p,q}$ WN-DNNs and analyze the corresponding function class. Section 3 gives the Rademacher complexities. In Section 4, we provide the error bounds for the approximation error of Lipschitz continuous functions. 
\section{Preliminaries}\label{sec:prelim}
In this section, we define the WN-DNNs, of which the weights and biases for all layers are scaled by some norm up to a normalization constant $c$. Furthermore, we demonstrate how it surpasses unregularized DNNs theoretically. 

A neural network on $\R^{d_0}\to\R^{d_{k+1}}$ with $k$ hidden layers is defined by a set of $k+1$ affine transformations $T_1: \R^{d_0}\to\R^{d_1},  T_2: \R^{d_1}\to\R^{d_2},\cdots,  T_{k+1}: \R^{d_k}\to\R^{d_{k+1}}$ and the ReLU activation $\sigma(u)=(u)_{+}=u I\{u>0\}$. The affine transformations are parameterized by $T_i(\uu)=\W_i^T\uu+\BB_i$, where $\W_{i}\in \R^{d_{i-1}\times d_i},\BB_i\in \R^{d_i}$ for $i=1,\cdots,k+1$.
	The function represented by this neural network is
	\[
	f(x)=T_{k+1}\circ\si\circ T_{k}\circ\cdots\circ\si\circ T_{1}\circ\x
	\]

Before introducing $L_{p,q}$ WN-DNNs, we build an augmented layer for each hidden layer by appending the bias neuron $1$ to the original layer, then combine the weight matrix and the bias vector as a new matrix. 

Define $f^{*}_0(\x)=(1,\x^T)^T$. Then the first hidden layer \[f_1(\x)= 
	T_{1}\circ\x\triangleq \tilde{\VV}_1^Tf_0^*(\x),\] where  $\tilde{\VV}_1=(\BB_1,\W_1^T)^T\in \R^{(d_0+1)\times d_1}$. Define the augmented first hidden layer as
	\[f^{*}_1(\x)=(1,(f_1(\x))^T)^T\in \R^{d_1+1}.\] Then $f^{*}_1(\x)\triangleq\VV_1^Tf_0^{*}(\x)$, where $\VV_1=(\ee_{10},\tilde{\VV}_1)\in \R^{(d_0+1)\times (d_1+1)}$ and $\ee_{10}=(1,0,\cdots,0)^T\in \R^{d_0+1}$. 
    Sequentially for $i=2,\cdots,k$, define the $i$th hidden layer as 
	\begin{equation}\label{eqn:def:f}
	f_i(\x)= T_{i}\circ \si\circ  f_{i-1}(\x)\triangleq \dotprod{\tilde{\VV}_i}{\si\circ f^{*}_{i-1}(\x)}, 
	\end{equation}
	where  $
	\tilde{\VV}_i=(\BB_i,\W_i^T)^T\in \R^{(d_{i-1}+1)\times d_i}$. Note that $\si(1)=1$, thus $(1,\si\circ f_{i-1}(\x))=\si\circ f^{*}_{i-1}(\x)$.
	The augmented $i$th hidden layer is
	\begin{equation}\label{eqn:aughiddenlayer}
	f_i^{*}(\x)=(1,(f_i(\x))^T)^T\in \R^{d_i+1},
	\end{equation} and $f^{*}_i(\x)\triangleq  \dotprod{\VV_i}{\si\circ f^{*}_{i-1}(\x)}$, where 
\begin{equation}\label{eqn:def:V}
	\VV_i=(\ee_{1i},\tilde{\VV}_i)\in \R^{(d_{i-1}+1)\times (d_i+1)},
	\end{equation} and $\ee_{1i}=(1,0,\cdots,0)^T\in \R^{d_{i-1}+1}$. The output layer is  
	\begin{equation}
	f(\x)=T_{k+1} \circ \si \circ f^{*}_{k}(\x)\triangleq \dotprod{\tilde{\VV}_{k+1}}{\si\circ f^{*}_{k}(\x)},
	 \end{equation}
	 where  $
	 \tilde{\VV}_{k+1}=(\BB_{k+1},\W_{k+1}^T)^T\in \R^{(d_{k}+1)\times d_{k+1}}$.
\paragraph{The $L{p,q}$ Norm.}
	The $L{p,q}$ norm of a $s_1\times s_2$ matrix $A$ is defined as 
	\[\norm{A}_{p,q}=\left( \sum\limits_{j=1}^{s_2}  \left( \sum\limits_{i=1}^{s_1}|a_{ij}|^p\right)^{q/p}\right)^{1/q},\] where $1\le p <\infty$ and $1\le q \le \infty$.  When $q=\infty$, $\norm{A}_{p,\infty}= \sup_{j} \left( \sum\limits_{i=1}^{s_1}|a_{ij}|^p\right)^{1/p}$. When $p=q=2$, the $L_{p,q}$ is the Frobenius norm. 
	
We motivate our introduction of WN-DNNs with a negative result when directly applying existing studies on fully connected DNNs with bias neurons. 

\paragraph{A Motivating Example.}
As shown in Figure \ref{fig:1}, define $f=T_2\circ\si\circ T_1:\R\to\R$, where $T_1(x)=(-x+1,-x-1)\triangleq \tilde{\VV}_1^T(1,x)^T$ and $T_2(\uu)=1-u_1-u_2\triangleq\tilde{\VV}_2^T(1,u_1,u_2)^T$. 
Consider $f^{'}=100T_2\circ\si\circ \frac{1}{100}T_1$, as shown in Figure \ref{fig:2} . Then 
\[f^{'}(x)=100-\si(-x+1)-\si(-x-1)=99+f(x)\]
Note that the product of the norms of all layers for $f^{'}$ remains the same as that for $f$: \[\norm{100T_2}_**\norm{\frac{T_1}{100}}_*=\norm{T_2}_**\norm{T_1}_*,\]
where the norm of the affine transformation $\norm{T_i}_*$ is defined as the norm of its corresponding linear transformation matrix $\norm{\tilde{\VV}_i}_*$ for $i=1,2$. Using a similar trick, we could replace the 100 in this example with any positive number. This on the other hand suggests an unbounded output even when the product of the norms of all layers is small. 

\begin{figure}[h]
\centering
\begin{subfigure}{0.45\textwidth}
\centering
	\includegraphics[scale=0.7]{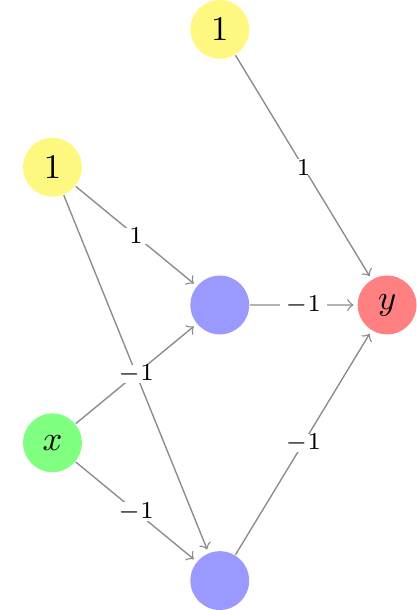}
	\caption{Visualization of $f$.}
	\label{fig:1}
	\end{subfigure}
	\begin{subfigure}{0.45\textwidth}
	\centering
	\includegraphics[scale=0.7]{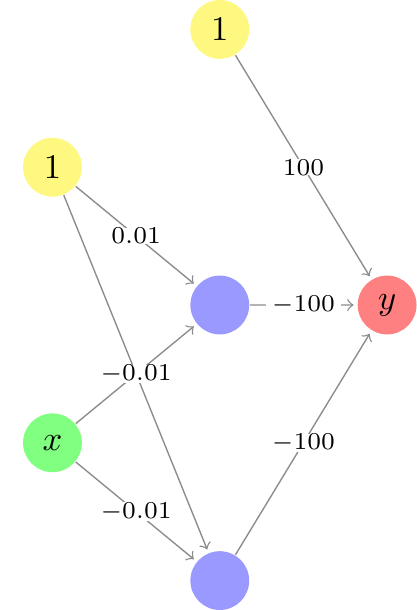}
	\caption{Visualization of $f^{'}$.}
	\label{fig:2}
	\end{subfigure}
	\caption{The motivating example.}
\end{figure}

Furthermore, a negative result will be presented in terms of Rademacher complexity in the following claim. 
\begin{claim}\label{claim:1}
	Define $\N_{\gamma_{*}\le \gamma}^{k,\dd}$ as a function class that contains all functions representable by some neural network of depth $k+1$ and widths $\dd$: $f=T_{k+1}\circ\si\circ T_{k}\circ\cdots\circ\si\circ T_{1}\circ\x$, where $\dd=(m_1,d_1,\cdots,d_k,1),$ $\norm{\cdot}_*$ is an arbitrary norm, and $T_i(\uu): \R^{d_{i-1}}\to\R^{d_{i}}=\tilde{\VV}_i^T(1,\uu^T)^T$, for $i=1,\cdots,k+1$, such that 
	\[\gamma_*=\prod\limits_{i=1}^{k+1}\norm{\tilde{\VV}_i}_{*}\le \gamma.\]
	Then for a fixed n and any sample  $S=\{\x_1,\cdots,\x_n\}\subseteq \R^{m_1}$,
	\[\Rademacherh_S(\N_{\gamma_{*}\le \gamma}^{k,\dd})=\infty.\]
\end{claim}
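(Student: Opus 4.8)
\emph{Proof plan.} The plan is to exploit the rescaling phenomenon of the motivating example directly: I would show that $\N_{\gamma_{*}\le\gamma}^{k,\dd}$ contains the constant map $\x\mapsto c$ for \emph{every} $c\in\R$, with weight-norm product at most $\gamma$. Since a function class that is unbounded at the sample points has infinite empirical Rademacher complexity, the claim then follows immediately.

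For the construction I would fix $c\in\R$ and a small $\epsilon>0$ and take the depth-$(k+1)$, width-$\dd$ network with $\W_i=\zer$ for all $i$, with $\BB_i=\epsilon\one_{d_i}$ for $i=1,\dots,k$, and with $\BB_{k+1}=c$. Because $\epsilon>0$ and $\si(\epsilon)=\epsilon$, a one-line induction via the layer recursion \eqref{eqn:def:f} (with base case $f_1=T_1\circ\x$) gives $f_i(\x)=\epsilon\one_{d_i}$ for all $i\le k$ and all $\x$, so the network's output is the constant $\W_{k+1}^T(\epsilon\one_{d_k})+\BB_{k+1}=c$ independently of $\x$. Each $\tilde{\VV}_i$ has a single nonzero block, so $\norm{\tilde{\VV}_i}_*=\epsilon\,c_i$ for $i\le k$ and $\norm{\tilde{\VV}_{k+1}}_*=|c|\,c_{k+1}$ for positive constants $c_1,\dots,c_{k+1}$ depending only on $\dd$ and $\norm{\cdot}_*$; hence $\gamma_*=\epsilon^{k}|c|\prod_{i=1}^{k+1}c_i$, which is $\le\gamma$ once $\epsilon$ is taken small enough (here $k\ge1$ is used). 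Thus $g_c:\x\mapsto c$ lies in $\N_{\gamma_{*}\le\gamma}^{k,\dd}$ for every $c$.

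Plugging this into $\Rademacherh_S(\F)=\tfrac1n\E_{\ep}\big[\sup_{f\in\F}\sum_{i=1}^n\epsilon_i f(\x_i)\big]$ and restricting the supremum to $\{g_c:c\in\R\}$, I get $\sup_{f\in\F}\sum_i\epsilon_i f(\x_i)\ge\sup_{c\in\R}c\sum_i\epsilon_i$, which equals $+\infty$ on the event $\{\sum_i\epsilon_i\neq0\}$ and is $\ge0$ on its complement (take $c=0$). Since that event has strictly positive probability for every fixed $n$, the expectation is $+\infty$, giving $\Rademacherh_S(\N_{\gamma_{*}\le\gamma}^{k,\dd})=\infty$.

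The only real content is the construction, and it is essentially forced: exactly as in the example, shrinking the incoming layers while inflating the output bias keeps the norm product fixed but lets the output grow without bound. The remaining obstacle is cosmetic — checking that the $(-\infty,+\infty]$-valued supremum inside the expectation is a legitimate measurable object — and it is worth flagging that the argument genuinely needs a bias neuron in at least one hidden layer ($k\ge1$): for $k=0$ the class is a bounded set of affine maps and the statement is false.
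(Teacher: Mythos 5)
Your proposal is correct and follows essentially the same route as the paper's proof: both arguments show that $\N_{\gamma_{*}\le\gamma}^{k,\dd}$ contains the constant function $\x\mapsto C$ for every $C\in\R$ by shrinking the hidden-layer norms while inflating the output bias (the paper rescales the first layer by $\gamma_0/(a_0C_0)$ and sets $T_{k+1}\equiv C_0$; you zero the weights and use $\epsilon$-biases, which is the same homogeneity trick), and both then conclude that the supremum inside the Rademacher expectation is $+\infty$ on the positive-probability event $\{\sum_i\epsilon_i\ne 0\}$. Your explicit remark that the argument needs $k\ge 1$ is a fair point that the paper leaves implicit.
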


Claim \ref{claim:1} shows the failure of current norm-based constraints on fully connected neural networks with the bias neuron in each hidden layer.   Prior studies \cite{bartlett1998sample,bartlett2017spectrally,pmlr-v75-golowich18a,neyshabur2018a,neyshabur2015norm,sun2016depth} included the bias neuron only in the input layer and considered layered networks parameterized by a sequence of weight matrices only, that is $\BB_i=\zer$ for all $i=1, \cdots, k+1$. While fixing the architecture of neural networks, these works imply that $\prod\limits_{i=1}^{k+1}\norm{\WW_i}_*$  is sufficient to control the Rademacher complexity of the function class represented by these DNNs, where $\norm{\cdot}_{*}$ is the spectral norm in \cite{bartlett2017spectrally,neyshabur2018a}, the $L_{1,\infty}$ norm in \cite{bartlett1998sample,sun2016depth}, the $L_{1,\infty}/L_{2,2}$ norm in \cite{pmlr-v75-golowich18a}, and 
the $L_{p,q}$ norm in \cite{neyshabur2015norm} for any $p\in[1,\infty)$, $q\in[1,\infty]$. However, this kind of control fails once the  bias neuron is added to each hidden layer, demonstrating the necessity to use WN-DNNs instead.
\paragraph{The $L_{p,q}$ WN-DNNs. }
	An $L_{p,q}$ WN-DNN by a normalization constant $c\ge 1$ with $k$ hidden layers is defined by a set of $k+1$ affine transformations $T_1: \R^{d_0}\to\R^{d_1},   T_2: \R^{d_1}\to\R^{d_2},\cdots,  T_{k+1}: \R^{d_k}\to\R^{d_{k+1}}$ and the ReLU activation, where $T_i(\uu)=\tilde{\VV}_i^T(1,\uu^T)^T$, $\tilde{\VV}_i\in\R^{(d_{i-1}+1)\times d_i}$ and $\norm{T_i}_{p,q}\triangleq\norm{\tilde{\VV}_i}_{p,q}$, for $i=1,\cdots,k+1$. In addition, $\norm{T_i}_{p,q}\equiv c$ for $i=1,\cdots,k$.

	Define $\N_{p,q,c,c_o}^{k,\dd}$ as the collection of all functions that could be represented by an $L_{p,q}$ WN-DNN with the normalization constant $c$ satisfying: 
	\begin{enumerate}
		\item[(a)] The number of neurons in the $i$th hidden layer is $d_i$ for $i=1,2,\cdots, k$. The dimension of input is $d_0$, and output $d_{k+1}$;
			\item[(b)] It has $k$ hidden layers;
		\item[(c)] $\norm{T_i}_{p,q}\equiv c$  for $i=1,\cdots,k$; \item[(d)] $\norm{T_{k+1}}_{p,q}\le c_o$.
		\end{enumerate}
    The following theorem provides some useful observations regarding $\N_{p,q,c,c_o}^{k,\dd}$.
\begin{theorem}\label{thm:nn:prop}
	Let $c,c_o,c_1,c_2,c_o^1,c_o^2>0$, $p\in[1,\infty)$, $q\in[1,\infty]$, $k,k_1,k_2\in \mathbb{N}$, $\dd=(d_0, d_1\cdots,d_{k+1})\in\mathbb{N}_+^{k+2}$, $\dd^1=(d^1_0, d^1_1\cdots,d^1_{k_1+1})\in\mathbb{N}_+^{k_1+2}$, and $\dd^2=(d^2_0, d^2_1\cdots,d^2_{k_2+1})\in\mathbb{N}_+^{k_2+2}$.
	\begin{enumerate}[label=(\alph*)]
		\item \label{nn:prop:1}
		A function $f:\R^{d_0}\to\R^{d_{k+1}}=T_{k+1}\circ\si\circ T_{k}\circ\cdots\circ\si\circ T_{1}\circ\x$, where $T_i(\uu)=\W_i^T\uu+\BB_i:\R^{d_{i-1}}\to\R^{d_i}$. Then $f\in \N_{p,q,c,c_o}^{k,\dd}$, as long as $\norm{T_i}_{p,q}\le c$ for $i=1,\cdots,k$ and $\norm{T_{k+1}}_{p,q}\le c_o$.
		\item \label{nn:prop:2}$ \N_{p,q,c_1,c_o}^{k,\dd}\subseteq \N_{p,q,c_2,c_o}^{k,\dd}$ if $c_1\le c_2$. 
		$ \N_{p,q,c,c^1_o}^{k,\dd}\subseteq \N_{p,q,c,c^2_o}^{k,\dd}$ if $c^1_o\le c^2_o$. 
		If $g\in \N_{p,q,c,1}^{k,\dd}$, then $c_og\in\N_{p,q,c,c_o}^{k,\dd}$.
		\item   \label{nn:prop:norm} $ \N_{p_1,q,c,c_o}^{k,\dd}\subseteq \N_{p_2,q,c,c_o}^{k,\dd}$ if $1\le p_1\le p_2<\infty$. $ \N_{p,q_1,c,c_o}^{k,\dd}\subseteq \N_{p,q_2,c,c_o}^{k,\dd}$ if $1\le q_1\le q_2\le \infty$.
		
		$ \N_{p,\infty,c,c_o}^{k,\dd}\subseteq \N_{p,q,\tilde{c},\tilde{c}_o}^{k,\dd}$, where  $\tilde{c}=c\max^{\frac{1}{q}}\{d_1,d_2\cdots,d_{k}\}$ and $\tilde{c}_o=d_{k+1}^{\frac{1}{q}}c_o$. Especially, when $d_{k+1}=1$, $\tilde{c}_o=c_o$.
		\item \label{nn:prop:widendeepen}$ \N_{p,q,c,c_o}^{k_1,\dd^1}\subseteq \N_{p,q,c,c_o}^{k_2,\dd^2}$ if $c\ge1$, $k_1\le k_2$, $d_0^2=d_0^1$, $d_i^2\ge d_i^1$ for $i=1,\cdots,k_1$, $ d_i^2\ge d_{k_1+1}^1$ for  $i>k_1$, and $d_{k_2+1}^2=d_{k_1+1}^1=1$.
	\end{enumerate}
\end{theorem}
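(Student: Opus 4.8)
The four parts are structural, and I would hang the whole argument on part~\ref{nn:prop:1}, which does the only real work: it upgrades the inequalities $\norm{T_i}_{p,q}\le c$ on the hidden layers to the equalities $\norm{T_i}_{p,q}\equiv c$ that the definition of $\N_{p,q,c,c_o}^{k,\dd}$ demands. The idea for part~\ref{nn:prop:1} is to rescale the layers one at a time, using the positive homogeneity of the ReLU, $\si(\lambda u)=\lambda\si(u)$ for $\lambda\ge0$, together with $\si(1)=1$ so the bias neuron is left alone. Parts~\ref{nn:prop:2} and~\ref{nn:prop:norm} then follow by exhibiting, for a member of the smaller class, a representation whose hidden norms are at most $c$ (or $\tilde c$) and whose output norm is at most $c_o$ (or $\tilde c_o$), and invoking part~\ref{nn:prop:1}; part~\ref{nn:prop:widendeepen} needs two genuinely new gadgets, dead neurons and norm-calibrated identity layers.

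For part~\ref{nn:prop:1}, first handle the degenerate layers: if some hidden $T_i\equiv\zer$ (so $\norm{\tilde{\VV}_i}_{p,q}=0$, which cannot be rescaled) then $f_i\equiv\zer$, and I would replace $T_i$ by the constant map $\uu\mapsto(-c,0,\dots,0)^T$, whose $L_{p,q}$ norm is exactly $c$ and which, being non-positive, is annihilated by $\si$, so $\si\circ f_i^{*}$ stays $(1,0,\dots,0)^T$ and nothing downstream changes. Assume now every hidden $\tilde{\VV}_i$ is nonzero and process $i=1,\dots,k$ in order: at step $i$, multiply the whole matrix $\tilde{\VV}_i$ by $\lambda_i:=c/\norm{\tilde{\VV}_i}_{p,q}$ and divide the weight block $\W_{i+1}$ of $\tilde{\VV}_{i+1}$ (but not its bias row $\BB_{i+1}$) by the same $\lambda_i$. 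Earlier steps only ever divide entries of $\tilde{\VV}_i$, so $\norm{\tilde{\VV}_i}_{p,q}\le c$ when we reach it and hence $\lambda_i\ge1$; then dividing entries of $\tilde{\VV}_{i+1}$ by $\lambda_i\ge1$ cannot raise $\norm{\tilde{\VV}_{i+1}}_{p,q}$ (the $L_{p,q}$ norm is monotone in the absolute values of the entries), so every unprocessed hidden layer still has norm $\le c$ and the output layer still has norm $\le c_o$. The step preserves $f$: scaling $\tilde{\VV}_i$ by $\lambda_i$ scales $f_i$ by $\lambda_i$, hence $\si\circ f_i^{*}$ by $\lambda_i$ in every coordinate except the bias, and dividing the weight block of $\tilde{\VV}_{i+1}$ by $\lambda_i$ exactly undoes that. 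After step $k$ all hidden layers have norm $c$, so $f\in\N_{p,q,c,c_o}^{k,\dd}$ (using $c\ge1$, which is built into the WN-DNN definition).

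Parts~\ref{nn:prop:2} and~\ref{nn:prop:norm} are then short. For $\N_{p,q,c_1,c_o}^{k,\dd}\subseteq\N_{p,q,c_2,c_o}^{k,\dd}$ with $c_1\le c_2$, a member has hidden norms $c_1\le c_2$ and output norm $\le c_o$, so part~\ref{nn:prop:1} applies; the inclusion in $c_o$ is immediate; and $g\in\N_{p,q,c,1}^{k,\dd}$ gives $c_og\in\N_{p,q,c,c_o}^{k,\dd}$ by multiplying $\tilde{\VV}_{k+1}$ by $c_o$, which multiplies its norm and the function by $c_o$ and leaves the hidden layers fixed. For part~\ref{nn:prop:norm}, the columnwise inequalities $\norm{v}_{p_2}\le\norm{v}_{p_1}$ ($p_1\le p_2$) and $\norm{w}_{q_2}\le\norm{w}_{q_1}$ ($q_1\le q_2$, applied to the vector of columnwise $\ell_p$ norms) give $\norm{A}_{p_2,q}\le\norm{A}_{p_1,q}$ and $\norm{A}_{p,q_2}\le\norm{A}_{p,q_1}$, while $\norm{A}_{p,q}\le s_2^{1/q}\norm{A}_{p,\infty}$ for a matrix with $s_2$ columns; applied to each $\tilde{\VV}_i$ these put a member of the source class into a representation whose hidden norms are $\le c$ (resp.\ $\le\tilde c=c\max^{1/q}\{d_1,\dots,d_k\}$) and whose output norm is $\le c_o$ (resp.\ $\le\tilde c_o=d_{k+1}^{1/q}c_o$), and since $\tilde c\ge c\ge1$ part~\ref{nn:prop:1} finishes each inclusion.

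For part~\ref{nn:prop:widendeepen} I would treat widening and deepening separately. \textbf{Widening:} append $d_i^2-d_i^1$ extra neurons to each hidden layer $i\le k_1$ with all incoming weights, biases, and outgoing weights zero; these output $0$, the function is unchanged, and since this only appends zero columns to $\tilde{\VV}_i$ and zero rows to $\tilde{\VV}_{i+1}$ every $L_{p,q}$ norm --- in particular every hidden norm, which must remain exactly $c$ --- is unchanged. \textbf{Deepening:} insert $k_2-k_1$ new hidden layers after the (widened) last original hidden layer, each copying the non-negative post-activation vector forward through a partial-identity transformation, on which $\si$ acts as the identity, and then take a rescaled copy of the original $T_{k_1+1}$ as the new output layer; the width hypotheses for $i>k_1$ ensure the new layers are wide enough to carry the required vector, and $\si$ being the identity on non-negative inputs keeps $f$ unchanged. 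The delicate point, which I expect to be the main obstacle, is the norm bookkeeping: a copy transformation is a scalar times a $0/1$ selection matrix, so its $L_{p,q}$ norm is that scalar times a power of the layer width, and one must choose the scalars (again using ReLU homogeneity) so that each inserted hidden layer has norm exactly $c$ while the reciprocal of the accumulated scale, absorbed into the output layer, keeps $\norm{T_{k_2+1}}_{p,q}\le c_o$; the hypothesis $c\ge1$ is what one leans on to make this system solvable, with a final appeal to part~\ref{nn:prop:1} to restore the exact-$c$ constraint if the raw construction only delivers $\le c$.
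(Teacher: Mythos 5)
Your parts (a)--(c) and the widening half of (d) follow the paper's own proof in essentially the same way: the paper also rescales the hidden layers one at a time by $c/\norm{\tilde{\VV}_i}_{p,q}\ge 1$, compensates in the weight block $\W_{i+1}$ while leaving $\BB_{i+1}$ untouched, pushes the final factor into the output layer, derives (b) and (c) from part (a) plus the monotonicity relations $\norm{A}_{p_2,q}\le\norm{A}_{p_1,q}$ and $\norm{A}_{p,q}\le s_2^{1/q}\norm{A}_{p,\infty}$, and widens with disconnected neurons. Your handling of a degenerate hidden layer with $\norm{T_i}_{p,q}=0$ (replacing it by the constant map $\uu\mapsto(-c,0,\dots,0)^T$ of norm exactly $c$) is in fact more careful than the paper's, which simply asserts $f=0$ there.

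The deepening half of (d) has a genuine gap. You propose to carry the post-activation vector $\si\circ f_{k_1}(\x)\in\R^{d^1_{k_1}}$ of the last original hidden layer through the $k_2-k_1$ inserted layers, and you assert that ``the width hypotheses for $i>k_1$ ensure the new layers are wide enough.'' They do not: for $i>k_1$ the hypothesis only guarantees $d^2_i\ge d^1_{k_1+1}=1$, not $d^2_i\ge d^1_{k_1}$. Concretely, $\dd^1=(1,2,1)$ and $\dd^2=(1,2,1,1)$ satisfy every stated condition, yet your single inserted layer has width $1$ and cannot transmit a $2$-dimensional activation. The paper's construction respects the width budget differently---it demotes the original output layer $T_{k_1+1}$ to hidden layer $k_1+1$ and appends width-$1$ identity layers---but that places a ReLU after $T_{k_1+1}$, whose output $g(\x)$ can be negative, so the deepened network computes $\si(g(\x))$ rather than $g(\x)$: exactly the sign problem your route was designed to avoid, and one the paper does not address. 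Indeed, no construction can close this step under the stated hypotheses: $g(x)=\si(x)-\si(-x)=x$ lies in $\N^{1,(1,2,1)}_{p,\infty,1,2}$, while every element of $\N^{2,(1,2,1,1)}_{p,q,c,c_o}$ has the form $w_3\si(u(x))+b_3$ and is therefore bounded above or bounded below, so it cannot equal the identity. The deepening claim needs a stronger width hypothesis (e.g., $d^2_i\ge d^1_{k_1}$ for $i>k_1$, or width at least $2$ so that $\si(g)$ and $\si(-g)$ can both be propagated) before either your argument or the paper's can be completed.
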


 In particular, Part (a) connects normalized neural networks to unregularized DNNs. Part (b) shows the increased expressive power of neural networks by increasing the normalization constant or the output layer norm constraint. Part (c) discusses the influence of the choice of $L_{p,q}$ normalization on its representation capacity.
Part (d) describes the gain in representation power by either widening or deepening the neural networks. 

\section{Estimating the Rademacher Complexities of $\N_{p,q,c,c_o}^{k,\dd}$}\label{sec:rader}
In this section, we bound the Rademacher complexities of $\N_{p,q,c,c_o}^{k,\dd}$, where $d_0=m_1$ and $d_{k+1}=1$. Without loss of generality, assume the input space $\mathcal{X}= [-1,1]^{m_1}$ in the following sections. Further define $p^*$ by $1/p+1/p^{*}=1$.
\begin{proposition}\label{prop:RCp1}
	Fix $q \ge 1, k \ge 0, c, c_o>0, d_i\in\mathbb{N}_+$ for $i=1,\cdots,k$, then for any set $S=\{\x_1,\cdots,\x_n\}\subseteq \X$ , we have
\begin{align*}
\Rademacherh_S(\N_{1,q,c,c_o}^{k,\dd})&\le\frac{c_o}{\sqrt{n}}\min\left( 2\max(1,c^k)\sqrt{k+2+\log(m_1+1)},\right.\\&\left.\sqrt{k\log 16}\sum\limits_{i=0}^kc^i+c^k(\sqrt{2\log(2m_1)}+\sqrt{k\log 16})\right).\end{align*}
\end{proposition}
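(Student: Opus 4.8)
The goal is to bound the empirical Rademacher complexity of $\N_{1,q,c,c_o}^{k,\dd}$, i.e. $\Rademacherh_S(\F) = \frac{1}{n}\E_{\si}\sup_{f\in\F}\sum_{i=1}^n \sigma_i f(\x_i)$, where the supremum is over functions $f$ represented by $L_{1,q}$ WN-DNNs. The first observation is that since the network output is $f(\x)=\langle\tilde{\VV}_{k+1}, \si\circ f_k^*(\x)\rangle$ with $\norm{\tilde{\VV}_{k+1}}_{1,q}\le c_o$, and since $\norm{\cdot}_{1,q}\ge\norm{\cdot}_{1,\infty}$ (columnwise $\ell_1$ norm), the output layer contracts by the $\ell_1/\ell_\infty$ duality: $|f(\x)|$-type quantities reduce to $c_o$ times a sup over individual coordinates of $\si\circ f_k^*(\x)$. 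So the plan is to \textbf{peel off layers one at a time}, each time using the variational/duality structure of the $L_{1,q}$ norm together with the contraction property of ReLU. Concretely I would set $\G_i$ to be the class of coordinate functions $\x\mapsto (f_i^*(\x))_j$ arising from an $L_{1,q}$ WN-DNN truncated at level $i$, and establish a recursive inequality relating $\Rademacherh_S(\G_{i})$ to $\Rademacherh_S(\G_{i-1})$.

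\textbf{The recursion.} For the layer map $f_i = \langle\tilde{\VV}_i,\si\circ f_{i-1}^*\rangle$ with $\norm{\tilde{\VV}_i}_{1,q}=c$ (for $i\le k$; $\le c_o$ for the output), each output coordinate is a linear combination $\sum_j v_j (\si\circ f_{i-1}^*(\x))_j$ with $\ell_1$-norm of the coefficient vector at most $c$. Pulling the Rademacher supremum over such combinations, and using $|\cdot|$ absorbed into $\sigma_i$, one gets $\Rademacherh_S(\G_i)\le c\cdot\Rademacherh_S(\si\circ\G_{i-1}\cup\{1\})$; then ReLU contraction (Ledoux–Talagrand, since $\si$ is $1$-Lipschitz with $\si(0)=0$) removes the $\si$, and the appended constant neuron $1$ contributes a term like $\frac{\sqrt{2\log 2}}{\sqrt n}$ or is handled by a standard bound on the Rademacher complexity of a finite set. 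Iterating from $i=k+1$ down to the base case $\G_0$ (coordinates of $(1,\x^T)^T$ on $[-1,1]^{m_1}$, whose Rademacher complexity is $O(\sqrt{\log(m_1+1)/n})$ via Massart's finite-class lemma applied to the $2(m_1+1)$ sign patterns) yields a geometric sum $c_o\sum_{i=0}^k c^i$ times $O(\sqrt{\log m_1/n})$ plus a $c_o c^k \sqrt{k}$-type accumulation from the $k$ constant-neuron corrections — this is essentially the second term in the $\min$.

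\textbf{The first bound in the min.} The alternative bound $\frac{c_o}{\sqrt n}\cdot 2\max(1,c^k)\sqrt{k+2+\log(m_1+1)}$ should come from a more global argument: bound $\sup_{f}\sum_i\sigma_i f(\x_i)$ directly by controlling $\sup_{\x\in S}\norm{f_k^*(\x)}_\infty$ through the Lipschitz/boundedness propagation (each layer multiplies the sup-norm by at most $c$, with the $+1$ bias neuron keeping things affine), giving a uniform envelope of size $\max(1,c^k)$-ish on the pre-output activations, then applying a single Massart-type bound to the effective function class whose complexity is governed by the number of layers and $\log(m_1+1)$ — the $\sqrt{k+2+\log(m_1+1)}$ factor reflecting the total "description length." I would prove both bounds separately and then take the minimum, which is valid since any single upper bound on $\Rademacherh_S$ that holds is a valid bound, hence so is the min of two of them.

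\textbf{Main obstacle.} The delicate point is the bookkeeping of the \emph{bias neuron} through the recursion: unlike the bias-free case, the appended constant $1$ interacts with the $L_{1,q}$ normalization (the column $\ee_{1i}$ in $\VV_i$) and must be tracked carefully so that the contraction constant stays exactly $c$ per layer and the additive constant-neuron corrections sum to the stated $c^k\sqrt{k\log 16}$ (or $c^k\sqrt{k\log 16}$-type) term rather than blowing up — this is precisely the phenomenon Claim \ref{claim:1} warns about, so getting the constants right (and seeing why normalization $\norm{T_i}_{p,q}\equiv c$ rather than $\le c$ is what saves us) is the crux. A secondary technical care is justifying the Ledoux–Talagrand contraction in the vector-valued layer setting and the interchange of sup and sum; these are standard but need the right formulation (e.g. peeling one scalar coordinate at a time, using that the $\ell_1$ ball's extreme points are signed basis vectors so the sup is attained at a single neuron).
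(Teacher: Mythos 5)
Your overall architecture --- reduce $L_{1,q}$ to $L_{1,\infty}$ via $\norm{\cdot}_{1,q}\ge\norm{\cdot}_{1,\infty}$, peel layers using $\ell_1$/$\ell_\infty$ duality and ReLU contraction, and track the bias neuron separately --- matches the paper's intent, and you correctly identify the bias-neuron bookkeeping as the crux. But there is a genuine gap in how you propose to run the recursion. You set it up at the level of expectations, $\Rademacherh_S(\G_i)\le c\,\Rademacherh_S(\si\circ\G_{i-1}\cup\{1\})$. The step $\sup_{\norm{\vv}_1\le c}\sum_j v_j a_j=c\max_j|a_j|$ forces you to control $\E_\epsilon\sup_f\bigl|\sum_i\epsilon_i\si(\cdot)\bigr|$, and the sign inside the absolute value depends on $f$, so it cannot simply be absorbed into the Rademacher signs; the standard remedy $\E\sup|X|\le 2\,\E\sup X$ costs a multiplicative factor of $2$ at every layer. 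Iterating gives an overall $(2c)^k$-type dependence --- exactly the exponential blow-up of \cite{neyshabur2015norm} --- not the additive $\sqrt{k\log 16}$ terms in the statement. The paper (following \cite{pmlr-v75-golowich18a}) avoids this by running the induction on the moment generating function: it bounds $\E_\epsilon\exp(tZ_j)\le 4^j\exp(t^2ns_j^2/2+tc^jA)$ for $Z_j=\sup_f\norm{\sum_i\epsilon_i\si\circ f_j(\x_i)}_{p^*}$, so the per-layer factor $4$ accumulates as $4^k$ inside the exponential bound, becomes an additive $k\log 4$ after Jensen's inequality and a logarithm, and turns into $\sqrt{k\log 16}$ after optimizing the temperature $\lambda$. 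The bias neuron is split off with H\"older's inequality at an optimally chosen exponent $r_j=s_{j-1}+1$, which is what produces the geometric sum $\sum_{i=0}^k c^i$ multiplying $\sqrt{k\log 16}$ (in your sketch that geometric sum multiplies $O(\sqrt{\log m_1})$ instead, which is not the claimed bound). Without the exponential-moment device your recursion cannot reach the second term of the $\min$.

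The first term of the $\min$ also needs a different argument from the one you sketch. A uniform envelope $\sup_{\x\in S}\norm{f_k^*(\x)}_\infty\le\max(1,c)^k$ plus ``a single Massart-type bound'' does not apply: the class is a continuum and Massart's lemma requires a finite set; ``description length'' does not convert into a cardinality here. The paper instead observes that appending the column $\ee_{1i}$ to $\tilde{\VV}_i$ turns each biased layer into an unbiased one with $\norm{\VV_i}_{1,\infty}=\max(1,c)$, so $\N_{1,\infty,c,c_o}^{k,\dd}$ embeds into the bias-free class with $\prod_i\norm{\WW_i}_{1,\infty}\le\max(1,c)^kc_o$, and then quotes \cite[Theorem 2]{pmlr-v75-golowich18a} verbatim; the factor $\sqrt{k+2+\log(m_1+1)}$ is that theorem's, not a Massart count. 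Your closing remark that the minimum of two valid upper bounds is itself a valid bound is of course correct, as is your initial reduction from $L_{1,q}$ to $L_{1,\infty}$.
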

\begin{proof}[Proof sketch]
	As $\si(1)=1$, we could treat the bias neuron in the $i$th hidden layer as a hidden neuron computed from the $(i-1)$th hidden layer by \[\si(\ee_{1i}^Tf^*_{i-1}(\x))=1,\] where $\ee_{1i}=(1,0,\cdots,0)^T\in \R^{d_{i-1}+1}$, and $f^*_{i-1}$ is the augmented $(i-1)$th hidden layer as defined in Equation \eqref{eqn:aughiddenlayer}. 
	Therefore, the new affine transformation could be parameterized by $\VV_i$ defined in Equation \eqref{eqn:def:V}, such that $\norm{\VV_i}_{1,\infty}=\max(1,c)$. Then the result is the minimum of the bound of \cite[Theorem 2]{pmlr-v75-golowich18a} on DNNs without bias neurons and that of Proposition \ref{prop:RCp2} when $p=1$.
	\end{proof}
\begin{proposition}\label{prop:RCp2}
	Fix $p, q \ge 1, k \ge 0, c,  c_o>0, d_i\in\mathbb{N}_+$ for $i=1,\cdots,k$, then for any set $S=\{\x_1,\cdots,\x_n\}\subseteq \X$ , we have 
	\begin{enumerate}[label=(\alph*)]
	\item for $p\in (1,2]$, 
	\begin{equation}\label{eqn:prop2:1}
		\begin{split}
	\Rademacherh_S(\N_{p,q,c,c_o}^{k,\dd})&\le c_o\sqrt{\frac{(k+1)\log 16}{n}}\left(\sum\limits_{i=1}^{k+1}c^{k-i+1}\prod\limits_{l=i}^kd_l^{[\frac{1}{p^*}-\frac{1}{q}]_+}\right)+\\&\frac{c_oc^{k}}{\sqrt{n}}\prod\limits_{i=1}^kd_i^{[\frac{1}{p^*}-\frac{1}{q}]_+}m_1^{\frac{1}{p^{*}}}\left[\min\left((\sqrt{p^{*}-1}, \sqrt{2\log(2m_1)} \right)+\sqrt{(k+1)\log 16}\right],
	\end{split}
\end{equation}
\item   for $p\in 1\cup(2,\infty)$, 
		\begin{equation}\label{eqn:prop2:2}
	\begin{split}
\Rademacherh_S(\N_{p,q,c,c_o}^{k,\dd})&\le c_o\sqrt{\frac{(k+1)\log 16}{n}}\left(\sum\limits_{i=1}^{k+1}c^{k-i+1}\prod\limits_{l=i}^kd_l^{[\frac{1}{p^*}-\frac{1}{q}]_+}\right)+\\&\frac{1}{\sqrt{n}}c_oc^{k}\prod\limits_{i=1}^kd_i^{[\frac{1}{p^*}-\frac{1}{q}]_+}m_1^{\frac{1}{p^{*}}}\left(\sqrt{2\log(2m_1)}+\sqrt{(k+1)\log 16}\right).
\end{split}
	\end{equation}
		\end{enumerate}
\end{proposition}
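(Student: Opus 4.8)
The plan is to run an exponential ``peeling'' argument in the spirit of \cite{pmlr-v75-golowich18a}, modified to accommodate both the general $L_{p,q}$ constraint and the bias neuron in every hidden layer. Fix $S$, let $\epsilon_1,\dots,\epsilon_n$ be i.i.d.\ Rademacher signs, and write $n\,\Rademacherh_S(\N_{p,q,c,c_o}^{k,\dd})=\Eh_\epsilon\sup_f\sum_{j=1}^n\epsilon_j f(\x_j)$. The first move is Jensen's inequality: for every $\lambda>0$, $\exp\!\big(\lambda\,n\,\Rademacherh_S\big)\le G_k(\lambda):=\Eh_\epsilon\exp\!\big(\lambda\sup_f\sum_j\epsilon_j f(\x_j)\big)$, so it suffices to bound $G_k(\lambda)$ and optimize $\lambda$ only at the very end; the $\tfrac1\lambda\log G_k(\lambda)$ term and the quadratic-in-$\lambda$ contribution of the moment generating functions is precisely what will balance to the $\sqrt{(k+1)\log 16}$ factors. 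Throughout I would work with the augmented recursion $\si\circ f^*_i(\x)=\si\big(\VV_i^{T}\,\si\circ f^*_{i-1}(\x)\big)$ from \eqref{eqn:def:f}--\eqref{eqn:def:V}, exploiting $\si(1)=1$ so that the first coordinate of $\si\circ f^*_i(\x)$ is the bias neuron.

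The core is the peeling step, applied first to the output layer and then to $\tilde{\VV}_k,\dots,\tilde{\VV}_1$ in turn. For the output layer, since $d_{k+1}=1$ the matrix $\tilde{\VV}_{k+1}$ is a single vector whose $L_{p,q}$ norm is its $\ell_p$ norm, so by H\"older $\sum_j\epsilon_j f(\x_j)=\dotprod{\tilde{\VV}_{k+1}}{\sum_j\epsilon_j\,\si\circ f^*_k(\x_j)}\le c_o\norm{\sum_j\epsilon_j\,\si\circ f^*_k(\x_j)}_{p^*}$. For a generic hidden layer $i$ I would (i) split off the bias coordinate via $\norm{(a,\mathbf b)}_{p^*}\le|a|+\norm{\mathbf b}_{p^*}$, extracting an additive $|\sum_j\epsilon_j|$; (ii) peel the weight matrix: writing $\tilde{\VV}_i=(\vv_1,\dots,\vv_{d_i})$ columnwise, setting $u_j:=\si\circ f^*_{i-1}(\x_j)$, and using positive homogeneity of the ReLU, $\big|\sum_j\epsilon_j\,\si(\dotprod{\vv_c}{u_j})\big|=\norm{\vv_c}_p\,\big|\sum_j\epsilon_j\,\si(\dotprod{\vv_c/\norm{\vv_c}_p}{u_j})\big|\le\norm{\vv_c}_p\,M$, where $M:=\sup_{\norm{w}_p\le1}\big|\sum_j\epsilon_j\,\si(\dotprod{w}{u_j})\big|$ does not depend on the column $c$, so that summing $p^*$-th powers over the $d_i$ columns and invoking the constraint $\sum_c\norm{\vv_c}_p^q=c^q$ together with the finite-dimensional interpolation $\norm{\cdot}_{p^*}\le d_i^{[1/p^*-1/q]_+}\norm{\cdot}_q$ on $\R^{d_i}$ produces the multiplicative factor $c\,d_i^{[1/p^*-1/q]_+}$; and (iii) remove the outer ReLU by a Ledoux--Talagrand contraction performed inside the exponential, costing only a bounded factor and returning $\norm{\sum_j\epsilon_j u_j}_{p^*}=\norm{\sum_j\epsilon_j\,\si\circ f^*_{i-1}(\x_j)}_{p^*}$, which closes the recursion. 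Unwinding all $k+1$ affine maps and finally splitting the bias coordinate of $f^*_0(\x)=(1,\x^{T})^{T}$ once more, one arrives at $G_k(\lambda)\le 16^{k+1}\,\Eh_\epsilon\exp\!\big(\lambda A\,|\sum_j\epsilon_j|+\lambda B\,\norm{\sum_j\epsilon_j\x_j}_{p^*}\big)$, with $A=c_o\sum_{i=1}^{k+1}c^{\,k-i+1}\prod_{l=i}^{k}d_l^{[1/p^*-1/q]_+}$ (one contribution per bias neuron, weighted by the factors it collects on its way to the output) and $B=c_o c^{\,k}\prod_{l=1}^{k}d_l^{[1/p^*-1/q]_+}$.

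It remains to control the bottom-level quantity. Both $|\sum_j\epsilon_j|$ and $\norm{\sum_j\epsilon_j\x_j}_{p^*}$ are bounded-difference functions of $\epsilon$, so sub-Gaussian moment-generating-function estimates apply once their means are bounded: $\Eh|\sum_j\epsilon_j|\le\sqrt n$ always, while $\Eh\norm{\sum_j\epsilon_j\x_j}_{p^*}$ is bounded using the martingale type-$2$ constant $\sqrt{p^*-1}$ of $\ell_{p^*}$ when $p\in(1,2]$ (so $p^*\ge2$), and --- for every $p$, in particular when $p^*=\infty$ or $p^*\in(1,2)$ --- by routing through $\norm{\cdot}_{p^*}\le m_1^{1/p^*}\norm{\cdot}_\infty$ and the maximal inequality $\Eh\norm{\sum_j\epsilon_j\x_j}_\infty\le\sqrt{2n\log(2m_1)}$ (using $\norm{\x_j}_\infty\le1$). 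Taking the better of the two routes in the case $p\in(1,2]$ yields the $\min\big(\sqrt{p^*-1},\sqrt{2\log(2m_1)}\big)$ there, while only the second is available for $p\in\{1\}\cup(2,\infty)$, giving the $\sqrt{2\log(2m_1)}$. Substituting these into $G_k(\lambda)$, taking logarithms, dividing by $\lambda n$, and choosing $\lambda$ of order $\sqrt{(k+1)/n}\,\big(A+B\,m_1^{1/p^*}\big)^{-1}$ produces the two displayed bounds after collecting terms (the factor $16^{k+1}$ feeding the $\sqrt{(k+1)\log 16}$ pieces, $A$ the width-free ``bias'' summand, and $B\,m_1^{1/p^*}$ the last term).

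I expect the main obstacle to be exactly the bias neuron, which is what Claim \ref{claim:1} warns about: unlike the bias-free layered networks treated in \cite{bartlett2017spectrally,pmlr-v75-golowich18a,neyshabur2018a,neyshabur2015norm,sun2016depth}, the extra column $\ee_{1i}$ of $\VV_i$ has $\ell_p$ norm $1$ irrespective of the normalization constant $c$, so it cannot be absorbed into the layer-norm product, and the peeling must instead generate and carry the family of additive terms $|\sum_j\epsilon_j|$ with exactly the right coefficients $c^{\,k-i+1}\prod_{l=i}^k d_l^{[1/p^*-1/q]_+}$. Making the contraction step compatible with these additive terms inside the exponential --- so that the constants still collapse to a single $16^{k+1}$ and the $\lambda$-optimization is not spoiled --- is the delicate point; a secondary one is checking that the norm-interpolation in step (ii) gives the sharp exponent $[1/p^*-1/q]_+$ uniformly across layers (so that, in particular, $q\le p^*$ makes the width dependence disappear), since a careless use of H\"older would only give a weaker bound.
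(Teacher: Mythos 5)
Your proposal follows essentially the same route as the paper's proof: Jensen's inequality plus a Golowich-et-al-style exponential peeling, the column-normalization step yielding the $c\,d_i^{[1/p^*-1/q]_+}$ factor (the paper's Lemma \ref{lemma:rader:p2}), the bias coordinate split off as an additive $\lvert\sum_j\epsilon_j\rvert$ term, Ledoux--Talagrand contraction, the two bottom-level estimates ($\sqrt{p^*-1}$ via strong convexity of $\ell_{p^*}$ for $p\in(1,2]$ and $m_1^{1/p^*}\sqrt{2\log(2m_1)}$ via Massart's lemma), and a final optimization over $\lambda$. The one step you yourself flag as delicate --- keeping the accumulated additive bias terms compatible with the contraction inside the exponential --- is resolved in the paper by running the peeling as an induction on the per-layer moment generating functions $\E_\epsilon\exp(tZ_j)$ and applying H\"older's inequality at each layer with an optimally chosen exponent $r_j=s_{j-1}+1$ to separate $\lvert\sum_i\epsilon_i\rvert$ from $Z_{j-1}$ before contracting.
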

\begin{proof}[Proof sketch]
	The proof consists of two steps. 
	In the first step, following the notations in Section 2, we define a series of random variables \[Z_{j}=\sup_{f\in\N^{k,\dd}_{p,q,c,c_o}}\norm{\sum\limits_{i=1}^n\epsilon_i\si\circ f_j(\x_i)}_{p^*},\] where $\{\epsilon_1,\cdots,\epsilon_n\}$ are $n$ i.i.d Rademacher random variables, and $f_j$ is the $j$th hidden layer of the neural network $f$. We prove by induction that  for any $t\in\R$, \[\E_\epsilon\exp(tZ_j)\le4^{j} \exp\left(\frac{t^2ns^2_j}{2}+tc^j\prod\limits_{i=1}^jd_i^{[\frac{1}{p^*}-\frac{1}{q}]_+}A_{m_1,S}^{p}\right),\] where \[s_j=\sum\limits_{i=2}^jc^{j-i+1}\prod\limits_{l=i}^jd_l^{[\frac{1}{p^*}-\frac{1}{q}]_+}+(m_1^{1/p^*}+1)c^j\prod\limits_{l=1}^jd_l^{[\frac{1}{p^*}-\frac{1}{q}]_+},\] and $A_{m_1,S}^{p}$ is some constant only depends on the sample. In addition, we relies on H\"older's inequality with an optimal parameter to separate the bias neuron. 
	Step 2 is motivated by the idea of  \cite{pmlr-v75-golowich18a}.  By Jensen's inequality \[	n\Rademacherh_S(\N_{p,q,c,c_o}^{k,\dd})\le \frac{1}{\lambda}\log\E_\epsilon \exp\left(\lambda\sup_{f \in \N_{p,q,c,c_o}^{k,\dd}}{\left(  \sum_{i=1}^n{\epsilon_i f(\x_i)}\right)}\right).\] 
	Finally we get the desired result by choosing the optimal $\lambda$.
\end{proof}
When $\dd=d \one$, the upper bound of Rademacher complexity depends on the width by $O(d^{k[ \frac{1}{p^*}-\frac{1}{q} ]_+})$, which is similar to the case without bias neurons \cite{neyshabur2015norm}. Furthermore, the dependence on widths 
disappears as long as $q\in[1,p^*]$. 
 In order to investigate the tightness of the bound given in Proposition \ref{prop:RCp2}, we consider the binary classification as a specific case, indicating that when $\frac{1}{p} + \frac{1}{q} < 1$, the dependence on width is unavoidable. 
\begin{proposition}\cite[Theorem 3]{neyshabur2015norm} \label{prop:lower}
	For any $p,q\ge 1$, $\dd=d \one$ and any $k\ge 2$, $n$ $\{-1,+1\}$ points could be shattered with unit margin by $\N_{p,q,c,c_o}^{k,\dd}$, with 
	\[c^kc_o\le (\log_2n)^{\frac{1}{p
	}} n^{(\frac{1}{p}+\frac{1}{q})}d^{-(k-2)[ \frac{1}{p^*}-\frac{1}{q} ]_+}.\]
\end{proposition}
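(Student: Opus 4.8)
The plan is to reduce the statement to the shattering construction already available for bias-free deep ReLU networks in \cite{neyshabur2015norm}, and then verify that this construction embeds into the class $\N^{k,\dd}_{p,q,c,c_o}$ of the present paper with no loss in the product of layer norms. Concretely, \cite{neyshabur2015norm} exhibits one fixed set of $n$ points together with, for every labeling $\y\in\{-1,+1\}^n$, a ReLU network $g_\y$ of depth $k{+}1$ and width $d$ \emph{without hidden-layer biases} that satisfies $y_i\, g_\y(\x_i)\ge 1$ for all $i$ and whose weight matrices obey $\prod_{i=1}^{k+1}\norm{\W_i}_{p,q}\le (\log_2 n)^{1/p}\, n^{1/p+1/q}\, d^{-(k-2)[1/p^*-1/q]_+}$. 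I would use these same points and, for each $\y$, this same network; the task is then only to produce a representative of $g_\y$ inside $\N^{k,\dd}_{p,q,c,c_o}$ whose common hidden-layer norm $c$ and output-layer norm $c_o$ satisfy the claimed inequality.

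First I would rebalance the layers. Because ReLU is positively homogeneous, replacing each $\W_i$ by $\alpha_i \W_i$ with $\prod_{i=1}^{k+1}\alpha_i=1$ does not change $g_\y$ but multiplies $\norm{\W_i}_{p,q}$ by $\alpha_i$. Pick the $\alpha_i$ so that $\alpha_i\norm{\W_i}_{p,q}$ equals a common value $c$ for $i=1,\dots,k$; if that value is smaller than $1$, rescale once more to make $c=1$ and absorb the slack into $\alpha_{k+1}$, which only decreases $c_o$ and meets the requirement $c\ge 1$ built into the definition of $\N^{k,\dd}_{p,q,c,c_o}$. After rebalancing, $c^k c_o=\prod_{i=1}^{k+1}\bigl(\alpha_i\norm{\W_i}_{p,q}\bigr)=\prod_{i=1}^{k+1}\norm{\W_i}_{p,q}$, which is bounded by the target quantity. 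Next I would perform the embedding: writing $g_\y$ in the affine form of Theorem \ref{thm:nn:prop}\ref{nn:prop:1} with zero bias vectors, so that $\tilde{\VV}_i=(\zer,\W_i^T)^T$, the paper's convention $\norm{T_i}_{p,q}\triangleq\norm{\tilde{\VV}_i}_{p,q}$ gives $\norm{T_i}_{p,q}=\norm{\W_i}_{p,q}$; hence after rebalancing $\norm{T_i}_{p,q}=c$ for $i\le k$ and $\norm{T_{k+1}}_{p,q}\le c_o$, and Theorem \ref{thm:nn:prop}\ref{nn:prop:1} places the network in $\N^{k,\dd}_{p,q,c,c_o}$.

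The point worth emphasizing is that the bias neuron this model forces into every hidden layer (the $\ee_{1i}$ column of \eqref{eqn:def:V}) costs nothing, precisely because the normalization is imposed on $\tilde{\VV}_i$ rather than on the augmented matrix $\VV_i$; this is the same feature that lets the present class avoid the pathology of Claim \ref{claim:1}. Should the architecture of the construction in \cite{neyshabur2015norm} not already match $\dd=d\one$ exactly (a different output width, or a handful of inert layers), Theorem \ref{thm:nn:prop}\ref{nn:prop:widendeepen} lets us widen and deepen to the required shape at no additional norm cost, since $c\ge1$.

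The genuine difficulty is internal to the imported ingredient, namely establishing the bound $\prod_{i=1}^{k+1}\norm{\W_i}_{p,q}\le (\log_2 n)^{1/p}\, n^{1/p+1/q}\, d^{-(k-2)[1/p^*-1/q]_+}$ in the first place. This is the construction of \cite{neyshabur2015norm}: one builds, from ReLU ``bump'' selector gadgets, a network whose $n$ point-selectors are assembled through $O(\log_2 n)$ successive refinements (responsible for the $(\log_2 n)^{1/p}$ factor), whose output layer is the $\pm1$ combination of selectors that enforces unit margin (responsible for the $n^{1/p+1/q}$ factor), and whose $k-2$ extra width-$d$ hidden layers are exploited to split each heavy linear operation into $d$ parallel light ones that are cheap to re-aggregate in $L_{p,q}$ norm exactly when $q>p^*$ (responsible for the $d^{-(k-2)[1/p^*-1/q]_+}$ saving, which disappears in the width-independent regime $q\le p^*$ of Proposition \ref{prop:RCp2}). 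Reconstructing that saving with the exponent of $d$ exactly right — which hinges on the $L_{p,q}$ norm of suitably sparse, column-balanced matrices and on tracking when the ``max over columns'' part of the norm dominates the $\ell_q$ aggregation — is where the real work lies; once it is taken as given, only the routine rescaling and embedding checks above remain in our setting.
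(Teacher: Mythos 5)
Your proposal is correct and follows essentially the same route as the paper: both treat the statement as an import of \cite[Theorem 3]{neyshabur2015norm} for bias-free networks, then use positive homogeneity of the ReLU to rebalance the layer norms so that each hidden transformation has norm exactly $c$ and the output norm absorbs the remaining factor, giving $c^k c_o \le \prod_i \norm{\W_i}_{p,q}$ and hence membership in $\N^{k,\dd}_{p,q,c,c_o}$ via Theorem \ref{thm:nn:prop}. The paper likewise leaves the internal construction of \cite{neyshabur2015norm} as a citation, so no further work is needed there.
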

\paragraph{Issues on Bias Neurons.}
$L_{p,q}$ norm-constrained fully connected DNNs with no bias neuron were investigated in prior studies \cite{bartlett1998sample,pmlr-v75-golowich18a,neyshabur2015norm,sun2016depth}. First of all, the generalization bounds given by \cite{bartlett1998sample,neyshabur2015norm,sun2016depth} have explicit exponential dependence on the depth, thus it is not meaningful to compare these results with ours. Secondly, \cite{pmlr-v75-golowich18a} provides the up-to-date Rademacher complexity bounds of both $L_{1,\infty}$ and $L_{2,2}$ norm-constrained fully connected DNNs without bias neurons. However, it is not straightforward to extend their results to fully connected DNNs with a bias neuron in each hidden layer. For example, consider the $L_{2,2}$ WN-DNNs with $c=1$. If we simply treat each bias neuron as a hidden neuron, as in the proof for Proposition \ref{prop:RCp1}, the complexity bounds \cite{pmlr-v75-golowich18a} grows exponentially with respect to the depth by $O(\sqrt{k}2^{\frac{k}{2}})$, while our Proposition \ref{prop:RCp2} gives a much tighter bound $O(k^{\frac{3}{2}})$.
\paragraph{Comparison with \cite{pmlr-v75-golowich18a} on the Rademacher compexity bounds of $L_{1,\infty}$ and $L_{2,2}$ WN-DNNs.}
\cite{pmlr-v75-golowich18a} is the most recent work on the Rademacher complexities of the $L_{1,\infty}$ and $L_{2,2}$ norm-constrained fully connected DNNs without bias neurons.  Consider a specific case when $\log(m_1)$ is small and $c_o=1$ to shed light on the possible influence of the bias neurons on the generalization properties.

\begin{table}[h]
\centering
		\begin{tabular}{c|cc}
			\hline
			&With Bias Neurons & Without Bias Neurons \cite{pmlr-v75-golowich18a}  \\ \hline
			$c<1$&$O(\frac{\sqrt{k}(1-c^{k+1})}{(1-c)\sqrt{n}})$ &$O(\frac{\sqrt{k}c^k}{\sqrt{n}})$ \\\hline
			$c=1,L_{1,\infty}$&$O(\frac{\sqrt{k}}{\sqrt{n}})$&$O(\frac{\sqrt{k}}{\sqrt{n}})$\\\hline
			$c=1,L_{2,2}$&$O(\frac{k^{3/2}}{\sqrt{n}})$&$O(\frac{\sqrt{k}}{\sqrt{n}})$\\\hline
			$c>1, L_{1,\infty}$&$O(\frac{\sqrt{k}c^{k}}{\sqrt{n}})$&$O(\frac{\sqrt{k}c^k}{\sqrt{n}})$
			\\ \hline
				$c>1,L_{2,2}$&$O(\frac{\sqrt{k}(c^{k+1}-1)}{\sqrt{n}})$&$O(\frac{\sqrt{k}c^k}{\sqrt{n}})$
			\\ \hline
		\end{tabular}
		\caption{Rademacher complexity bounds for $L_{1,\infty}$/$L_{2,2}$ WN-DNNs with/without bias neurons. }
		\label{tab:1}
\end{table}

As summarized in Table \ref{tab:1}, these comparisons suggest that the inclusion of a bias neuron in each hidden layer might lead to extra dependence of generalization bounds on the depth especially when $c$ is small. Note that, when $c<1$, $\sqrt{k}(1-c^{k+1})/(1-c)\to \infty$, while $\sqrt{k}c^k\to 0$, as $k\to \infty$. For $L_{2,2}$ WN-DNNs, when $c=1$, the bounds are $O(\frac{k^{\frac{3}{2}}}{\sqrt{n}})$ if with bias neurons and $O(\frac{\sqrt{k}}{\sqrt{n}})$ without bias neurons. For $L_{2,2}$ WN-DNNs, when $c>1$, the bounds are $O(\frac{\sqrt{k}(c^{k+1}-1)}{\sqrt{n}})$ if including bias neurons and $O(\frac{\sqrt{k}c^k}{\sqrt{n}})$ if excluding bias neurons. Another interesting observation is that the complexity bounds remain the same no matter whether bias neurons are included or not, when $c>1$ for $L_{1,\infty}$ WNN-DNNs. 

\section{Approximation Properties }\label{sec:approx}
In this section, we analyze the approximation properties of $L_{p,q}$ WN-DNNs and show the theoretical advantage of $L_{1,\infty}$ WN-DNN. We first introduce a technical lemma, demonstrating that any wide one-hidden-layer neural network could be exactly represented by a deep but narrow normalized neural network. In addition, Lemma \ref{lemma:approx:con:deep} indicates that  $\N_{1,\infty,\cdot,c_o}^{1,(m_1,r,1)}\subseteq \N_{p,\infty,1,2c_o}^{k,(m_1,([r/k]+2m_1+3){\bf1}_k,1)}$ for any $r>1$, $k\in \N$, and $c_o>0$, where $[x]$ is the smallest integer which is greater than or equal to $x$, and ${\bf1}_k=(1,\cdots,1)\in\R^k$.
\begin{lemma}\label{lemma:approx:con:deep}
	Assume that a function \[g(\x):\R^{m_1}\to\R=\sum\limits_{i=1}^rc_i\si (\w_i^T\x+b_i)\] satisfies that $\sum\limits_{i=1}^r |c_i|\le c_o$ and  $\norm{(b_i,\w_i^T)}_1=1$. Then for any integer $k\in[1,r]$,  \[g\in\N_{p,q,wid_k^{1/q},2c_o}^{k,\dd^k},\] where $wid_k=[r/k]+2m_1+3$, 
	$d^k_0=m_1$, $d^k_i=wid_k$ for $i=1,\cdots,k$, and $d^k_{k+1}=1$.
\end{lemma}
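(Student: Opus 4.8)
The plan is, for every $1\le k\le r$, to build affine maps $T_1,\dots,T_{k+1}$ with $T_i(\uu)=\W_i^T\uu+\BB_i$ of the correct dimensions so that $g=T_{k+1}\circ\si\circ T_k\circ\cdots\circ\si\circ T_1\circ\x$ with $\norm{T_i}_{p,q}\le wid_k^{1/q}$ for $i\le k$ and $\norm{T_{k+1}}_{p,q}\le 2c_o$; part~(a) of Theorem~\ref{thm:nn:prop} then yields $g\in\N_{p,q,wid_k^{1/q},2c_o}^{k,\dd^k}$. Since each intermediate $\tilde{\VV}_i$ has $wid_k$ columns, it suffices to arrange that \emph{every} such column has $\ell_p$-norm at most $1$, which forces $\norm{\tilde{\VV}_i}_{p,q}\le wid_k^{1/q}$ and in particular covers the $q=\infty$ version highlighted before the lemma, where $wid_k^{1/q}=1$.

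\textbf{Construction.} Write $g=2\sum_{i=1}^r c_iu_i$ with $u_i:=\tfrac12\si(\w_i^T\x+b_i)\ge0$, and split $\{1,\dots,r\}$ into consecutive blocks $G_1,\dots,G_k$ of sizes $\le[r/k]$ (all nonempty since $k\le r$). Hidden layer $j$ holds three kinds of units, the remaining slots (up to width $wid_k$) being zero: (i) $2m_1$ \emph{memory} units equal to $\si(\x_l)$ and $\si(-\x_l)$ ($l=1,\dots,m_1$), produced from $\x$ in layer $1$ and copied forward for $j\ge2$ (possible since $\si$ fixes nonnegative reals); (ii) $|G_j|$ units equal to $u_i$, $i\in G_j$, produced from $\x$ in layer $1$ and, for $j\ge2$, from the memory units via $\x_l=\si(\x_l)-\si(-\x_l)$ and $\si(\tfrac12 v)=\tfrac12\si(v)$; (iii) for $j\ge2$, two \emph{accumulator} units carrying the $|c_i|$-weighted averages of the $u_i$ accumulated over the blocks $G_1,\dots,G_{j-1}$, one over the indices with $c_i>0$ and one over those with $c_i<0$ (each set to $0$ if its total weight vanishes). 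Going from layer $j-1$ to layer $j$, each accumulator is updated as a convex combination of its previous value and the $u_i$ of block $G_{j-1}$, the running normalizers being absorbed into the coefficients; so it stays nonnegative and is reproduced exactly. The output map $T_{k+1}$ reads the two accumulators (which aggregate $G_1,\dots,G_{k-1}$) and the block-$G_k$ activations from layer $k$ and returns their correctly weighted combination, namely $g=2\sum_{i=1}^r c_iu_i$.

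\textbf{Norm bounds.} Memory and copy columns are $\pm$ standard basis vectors, of $\ell_p$-norm $1$. The column producing $u_i$ for $i\in G_j$, $j\ge2$, has entries $\tfrac12 w_{il}$ and $-\tfrac12 w_{il}$ on the $\si(\pm\x_l)$ units and $\tfrac12 b_i$ on the bias unit, so its $\ell_p$-norm is $\tfrac12\big(2\sum_l|w_{il}|^p+|b_i|^p\big)^{1/p}\le\tfrac12\cdot 2^{1/p}\le1$, using $\sum_l|w_{il}|^p\le(\sum_l|w_{il}|)^p$, $\norm{(b_i,\w_i^T)}_1=1$, and the elementary bound $2a^p+b^p\le2$ for $a,b\ge0$, $a+b\le1$, $p\ge1$ (maximum at $(1,0)$); the layer-$1$ version of this column is $\tfrac12(b_i,\w_i^T)$, of $\ell_p$-norm $\le\tfrac12$. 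Each accumulator column is a convex-combination vector, of $\ell_1$-norm $1$, hence $\ell_p$-norm $\le1$. So every intermediate column has $\ell_p$-norm $\le1$, and since each hidden layer uses at most $2m_1+[r/k]+2\le wid_k$ nonzero units, $\norm{T_i}_{p,q}\le wid_k^{1/q}$. Finally the single column of $\tilde{\VV}_{k+1}$ puts weights $2w^{+}$ and $-2w^{-}$ on the two accumulators ($w^{\pm}$ being their total weights over $G_1,\dots,G_{k-1}$) and $2c_i$ on the block-$G_k$ activations, so its $\ell_1$-norm is $2\big(w^{+}+w^{-}+\sum_{i\in G_k}|c_i|\big)=2\sum_{i=1}^r|c_i|\le2c_o$, i.e. $\norm{T_{k+1}}_{p,q}\le2c_o$. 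For $k=1$ layer $1$ simply holds $u_1,\dots,u_r$ with no memory or accumulators; degenerate cases such as $g\equiv0$ or a vanishing total weight are immediate.

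\textbf{Main obstacle.} The only genuinely delicate step is the accumulation. A naive running partial sum would have to reconstruct a signed carried value as a difference of two nonnegative halves \emph{and} add the new terms within a single column, forcing $\ell_p$-norm $\approx 2^{1/p}$ (at least $2$ when $p=1$), which is fatal once $q=\infty$ and $wid_k^{1/q}=1$. The fix is to carry the \emph{normalized} weighted average rather than the raw sum, keeping the scalar normalizers symbolic and discharging them only in the output layer; then every accumulator update is an honest convex combination, so its column has $\ell_1$-norm exactly $1$. The companion point is the factor $\tfrac12$ in $u_i$: after layer $1$ the coordinate $\x_l$ survives only as $\si(\x_l)-\si(-\x_l)$, so assembling $\w_i^T\x+b_i$ from the memory units would cost $\ell_p$-norm up to $2^{1/p}$, and it is precisely this halving that reappears as the factor $2$ in the output bound $2c_o$.
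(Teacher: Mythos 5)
Your construction is correct and is essentially the paper's own proof: both decompose $g$ into positively and negatively weighted parts, carry $\si\circ\x$ and $\si\circ(-\x)$ as memory units through every layer, compute the ReLU units block by block, maintain the accumulated part as a \emph{normalized} convex combination so each column keeps unit norm, and discharge the normalizers together with the factor $2$ (coming from $\x=\si\circ\x-\si\circ(-\x)$) in the output layer, yielding the $2c_o$ bound. The only differences are cosmetic: you absorb the factor $2$ into $u_i=\tfrac12\si(\cdot)$ rather than rescaling $\w_i\mapsto 2\w_i$, and you verify the $\ell_p$ column norms directly for all $p,q$ instead of proving the $L_{1,\infty}$ case and invoking the containments of Theorem~\ref{thm:nn:prop}.
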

\begin{proof}[Proof sketch]
	Note that the shallow neural network $g$ could be decomposed as
	\[\sum\limits_{i=1}^{r_1}c_i^+\si\left((\w_i^+)^T\x+b_i^+\right)-\sum\limits_{i=1}^{r_2}c_i^-\si\left((\w_i^-)^T\x+b_i^-\right),
	\]
	where $c_i^+, c_i^->0$ and $r_1+r_2=r$. We consider a simplified case when  $g(\x)=\sum\limits_{i=1}^{r_1}c_i^+\si\left((\w_i^+)^T\x+b_i^+\right)$ to illustrate the main idea of our proof. Without loss of generality, assume that $\norm{(b_i,2\w_{i}^T)}_1=1$. First create a set  \[\mathcal{C}=\{\si\left((\w_i^+)^T\x+b_i^+\right),i=1,\cdots,r_1\}.\] In order to build a $k+1$-layer WN-DNN to represent $g$, we partition $\mathcal{C}$ into $k$ equally sized subsets:  $\mathcal{C}_1,\cdots,\mathcal{C}_k$.  The key idea is to get all elements of $\mathcal{C}_j$ in the $j$th hidden layer for $j=1,\cdots,k$, while keeping both $\si\circ\x$, and $\si\circ-\x$. In addition, the normalized cumulative sum $S_j$  of $\cup_{i\le j}\mathcal{C}_i$ is computed in the $j+1$th hidden layer.  More specifically, \[S_j=\frac{\sum\limits_{i=1}^{jr_1/k}c_i^+\si\left((\w_i^+)^T\x+b_i^+\right)}{\sum\limits_{i=1}^{jr_1/k}c_i^+}.\]
	Note that \[(\w_i^+)^T\x+b_i^+=(\w_i^+)^T\si\circ\x-(\w_i^+)^T\si\circ(-\x)+b_i^+,\] and  \[S_j=\frac{\sum\limits_{i=1}^{(j-1)r_1/k}c_i^+}{\sum\limits_{i=1}^{jr_1/k}c_i^+}\si(S_{j-1})+\sum\limits_{i=(j-1)r_1/k+1}^{jr_1/k}\frac{c_i^+}{{\sum\limits_{i=1}^{jr_1/k}c_i^+}}\si\left((\w_i^+)^T\x+b_i^+\right).\] Thus the $L_{1,\infty}$ norm of the corresponding transformation still $\le 1$. 
	\end{proof}
Based on Lemma \ref{lemma:approx:con:deep}, we establish that a WN-DNN is able to approximate any Lipschitz-continuous function arbitrarily well by loosing the constraint for the norm of the output layer and either widening or deepening the neural network at the same time. Especially, for $L_{p,\infty}$ WN-DNNs, the approximation error could be purely controlled by the norm of the output layer, while the $L_{p,\infty}$ norm of each hidden layer is fixed to be 1. 
\begin{theorem}\label{thm:approx}
	$f:\X\to\R$, satisfying that $\norm{f}_\infty\le L$, and $\lvert f(x)-f(y)\rvert\le L\norm{x-y}_\infty$. Then for any $p\in[1,\infty)$, $q\in[1,\infty]$, and any integer $k\in[1,C_r(m_1)(\log \frac{c_o}{L})^{-2(m_1+1)/(m_1+4)}\left(\frac{c_o}{L}\right)^{2(m_1+3)/(m_1+4)}] $, if $c_o$ greater than a constant depending only on $m_1$, there exists a function $h \in\N_{p,q,wid_k^{1/q},2c_o}^{k,\dd^k}$, where \[wid_k=[
	k^{-1}C_r(m_1)(\log \frac{c_o}{L})^{-\frac{2(m_1+1)}{m_1+4}}\left(\frac{c_o}{L}\right)^{\frac{2(m_1+3)}{m_1+4}}]+2m_1+3,\] 
	$\dd^k=(m_1,wid_k,\cdots,wid_k,1)$, such that 
	\[\sup\limits_{\norm{\x}_\infty\le 1}\vert f(\x)-h(\x)\rvert\le C(m_1)L(\frac{c_o}{L})^{-\frac{2}{m_1+1}}\log\frac{c_o}{L},	 \]
	where $C_r(m_1)$ and $C(m_1)$ denotes some constant that depends only on $m_1$.
\end{theorem}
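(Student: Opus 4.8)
The plan is to prove Theorem~\ref{thm:approx} in two stages. \emph{Stage 1:} approximate $f$ uniformly on $\X$ by a shallow (one–hidden–layer) ReLU network whose hidden weights are exactly $L_1$–normalized and whose output $L_1$–norm is controlled. \emph{Stage 2:} feed that shallow network into Lemma~\ref{lemma:approx:con:deep} to turn it into a deep, narrow WN-DNN of the required shape, and then optimize the two free parameters — the shallow width $r$ and the depth $k$ — against the output budget $c_o$.

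For Stage~1 I would establish a quantitative universal approximation statement of the following form: for every integer $r\ge 1$ there is a network $g_r(\x)=\sum_{i=1}^r c_i\,\si(\w_i^T\x+b_i)$ with $\norm{(b_i,\w_i^T)}_1=1$ for all $i$, with $\sum_{i=1}^r |c_i|\le B(r)$ and $\sup_{\norm{\x}_\infty\le 1}|f(\x)-g_r(\x)|\le E(r)$, where $B(r)$ grows and $E(r)$ decays at polynomial (up to logarithmic) rates in $r$ with $m_1$-dependent exponents. A natural route is to first replace $f$ by a smoothed surrogate at resolution $h$ (a mollification, or a Bernstein/averaging operator), paying $O(Lh)$ in sup-norm, and then to expand the surrogate in ridge functions supported on a family of hyperplanes that meet $[-1,1]^{m_1}$, tracking the total variation of the induced slopes in order to bound $\sum|c_i|$. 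Passing to the \emph{exact} $L_1$-normalization $\norm{(b_i,\w_i^T)}_1=1$ costs only a bounded factor, since for a hyperplane crossing $[-1,1]^{m_1}$ one has $\norm{(b_i,\w_i^T)}_1\le 2\norm{\w_i}_1$, and the rescaling is absorbed into the $c_i$; converting an $L_2$-type error estimate for the ridge expansion into a uniform bound over $\X$ by a grid-and-union-bound argument is what produces the $\log(c_o/L)$ factor. \emph{This quantitative shallow estimate is the step I expect to be the main obstacle}: one must tune the smoothing resolution $h$, the number of ridge terms, and the magnitude of the coefficients so that, after eliminating $r$ in favour of the output budget $c_o$, the residual error and the width carry exactly the exponents $\tfrac{2}{m_1+1}$, $\tfrac{2(m_1+3)}{m_1+4}$ and $\tfrac{2(m_1+1)}{m_1+4}$ (with the stated logarithmic corrections) appearing in the theorem.

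For Stage~2, fix the output budget $c_o$ and let $r=r(c_o)$ be the largest integer with $B(r)\le c_o$; then $g_r$ satisfies the hypotheses of Lemma~\ref{lemma:approx:con:deep} with output constant $c_o$, so for \emph{every} integer $k\in[1,r]$ we get $g_r\in\N_{p,q,wid_k^{1/q},2c_o}^{k,\dd^k}$ with $wid_k=\lceil r/k\rceil+2m_1+3$ and $\dd^k=(m_1,wid_k,\dots,wid_k,1)$, simultaneously for all $p\in[1,\infty)$ and $q\in[1,\infty]$. Taking $h=g_r$ as the representing WN-DNN gives $\sup_{\norm{\x}_\infty\le1}|f(\x)-h(\x)|\le E(r(c_o))$; substituting the closed form of $r(c_o)$ from Stage~1 turns $E(r(c_o))$ into $C(m_1)L(c_o/L)^{-2/(m_1+1)}\log(c_o/L)$, turns $\lceil r(c_o)/k\rceil+2m_1+3$ into the stated $wid_k$, and turns the admissible range $k\le r(c_o)$ into $k\le C_r(m_1)(\log\tfrac{c_o}{L})^{-2(m_1+1)/(m_1+4)}(c_o/L)^{2(m_1+3)/(m_1+4)}$. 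The hypothesis that $c_o$ exceed an $m_1$-dependent constant (relative to $L$) is exactly what forces $r(c_o)\ge1$, keeps $\log(c_o/L)>0$, and lets the exponents combine as claimed; and specializing to $q=\infty$ recovers the sharper reading in which every hidden layer has $L_{p,\infty}$-norm exactly $1$ and the whole approximation error is absorbed into the output-layer constraint $2c_o$.
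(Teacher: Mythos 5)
Your two-stage architecture is exactly the paper's: approximate $f$ by a one-hidden-layer ReLU network with unit-$L_1$ hidden weights and output $L_1$-norm at most $c_o$, then feed it to Lemma~\ref{lemma:approx:con:deep} to obtain, for every $k\in[1,r]$, a member of $\N_{p,q,wid_k^{1/q},2c_o}^{k,\dd^k}$ with $wid_k=[r/k]+2m_1+3$. Your Stage~2 is correct and matches the paper line for line, including the role of the lower bound on $c_o$ and the bookkeeping that converts $k\le r(c_o)$ into the stated admissible range for $k$.

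The gap is Stage~1, and you have flagged it yourself. The paper does not derive the quantitative shallow approximation; it imports it wholesale from \cite[Propositions~1~\&~6]{bach2017breaking}, which give, for a free parameter $\gamma>0$, a network with $\sum_i|c_i|\le c_o$, $\norm{(b_i,\w_i^T)}_1=1$, $r\le c_2(m_1)\gamma^{-2(m_1+1)/(m_1+4)}$, and uniform error $c_o\gamma+c_1(m_1)L(c_o/L)^{-2/(m_1+1)}\log(c_o/L)$; the theorem then follows by the single choice $\gamma=c_1(m_1)(c_o/L)^{-1-2/(m_1+1)}\log(c_o/L)$. Your proposed replacement --- mollification plus a ridge-function expansion with total-variation bookkeeping and a grid-and-union-bound argument --- is a sketch, not a proof, and there is no reason to expect it to land on the exponents $2/(m_1+1)$ and $2(m_1+1)/(m_1+4)$ without essentially reconstructing Bach's analysis (the first exponent is the approximation rate of Lipschitz functions by the unit ball of the variation-norm space $\mathcal{F}_1$; the second comes from a separate Maurey-type discretization of that ball into $r$ neurons in sup-norm, which is also where the logarithm enters). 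Note also that the two error contributions $c_o\gamma$ and $c_1(m_1)L(c_o/L)^{-2/(m_1+1)}\log(c_o/L)$ arise from these two distinct sources and are balanced by tuning $\gamma$ alone, not by a single monotone trade-off $B(r)$ versus $E(r)$ as in your formulation. The honest fix is simply to cite the two propositions of \cite{bach2017breaking} as the paper does; as written, your proposal leaves the central quantitative ingredient unestablished.
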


Theorem \ref{thm:approx} shows that the approximation bounds could be controlled by $c_o$ given a sufficiently deep or wide  $L_{p,q}$ WN-DNN.  Assume that the loss function is 1-Lipschitz continuous, then the dependence of the corresponding generalization bound on the architecture for each $\N_{p,q,wid_k^{1/q},2c_o}^{k,\dd^k}$ defined above are summarized as follows:

\begin{enumerate}
	\item[(a)] $p=1, q=\infty$: $O\left(\sqrt{k}c_o\right)$;\\
	\item[(b)] $p=1, q<\infty$: $O\left(\sqrt{k}c_owid_k^{\frac{k}{q}}\right)$;\\
	\item[(c)] $p>1, q\in(p^*,\infty]$: $O\left(\sqrt{k}c_o[(1+wid_k)^{\frac{1}{p^*}}]^k\right)$;\\
	\item[(d)] $p>1, q\in[1,p^*]$: $O\left(\sqrt{k}c_o[(1+wid_k)^{\frac{1}{q}}]^k\right)$.
\end{enumerate}

\section{Concluding Remarks}
We present a general framework for capacity control on WN-DNNs. In particular, we provide a satisfying answer for the central question: we obtain the generalization bounds for $L_{1,\infty}$ WN-DNNs that grows with depth by a square root term while getting the approximation error controlled. It will be interesting to extend this work to mullticlass classification. However, if handling via Radermacher complexity analysis, the generalization bound will depend on the square root of the number of classes \cite{zhang2004statistical}. Besides the extension to convolutional neural networks, we are also working on the design of effective algorithms for $L_{1,\infty}$ WN-DNNs.
\subsubsection*{Acknowledgments}
We thank the anonymous reviewers for their careful reading of our manuscript and their insightful comments that have greatly improved the paper.

\bibliography{references} 
\clearpage
\appendix
\textbf{Supplementary Material}
\section{Claim \ref{claim:1}}
\subsection{Proof for Claim \ref{claim:1}}
\begin{proof}
	We first show that for any $\gamma_0 >0$, any norm $\norm{\cdot}_*$, and  any $C_0 >0$,  there exists a function $f_{\tilde{\VV}}$ satisfying $f_{\tilde{\VV}} \equiv C_0$ and $\prod\limits_{i=1}^{k+1}\norm{\tilde{\VV}_i^T}_*\le \gamma_0$. First assume that $\norm{(1,0,\cdots,0)}_*= a_0$. Note that $a_0> 0$ by the definition of the norm. To prove this, we could set an arbitrary $\tilde{\VV}_1$ satisfying that $\norm{\tilde{\VV}_1^T}_{*}=\frac{\gamma_0}{a_0C_0}$,  the arbitrary $\tilde{\VV}_i$s satisfying that $\norm{\tilde{\VV}_i}_{*}=1$ for $i=2,\cdots,k$, and the output layer as $T_{k+1}(\u)=C_0$. Then $f_{\tilde{\VV}}\equiv C_0$, and \[\prod\limits_{i=1}^{k+1}\norm{\tilde{\VV}_i}_{*}\le \frac{\gamma_0}{a_0C_0}*1^{k-1}*a_0C_0=\gamma_0.\]	
	Then 
	\begin{subequations}
		\begin{align}\Rademacherh_S(\N_{\gamma_{*}\le \gamma}^{k,\dd})&=\E_\epsilon\left[ \sup_{f \in \F}{\left( \frac{1}{n} \sum_{i=1}^n{\epsilon_i f(z_i)} \right)} \right]\nonumber\\
		&\ge
		\P(\sum_{i=1}^n\epsilon_i\ne 0)\E_\epsilon\left[ \sup_{f \in \N_{\gamma_{*}\le \gamma}^{k,\dd}}{\left( \frac{1}{n} \sum_{i=1}^n{\epsilon_i f(z_i)} \right)}|\sum_{i=1}^n\epsilon_i\ne 0 \right] \nonumber\\
		&\ge \frac{1}{2}\E_\epsilon\left[ \sup_{f \in\N_{\gamma_{*}\le \gamma}^{k,\dd}}{\left( \frac{1}{n} \sum_{i=1}^n{\epsilon_i f(z_i)} \right)}|\sum_{i=1}^n\epsilon_i\ne 0 \right]\label{eqn:c1}\\
		&\ge \frac{1}{2}\E_\epsilon\left[ \sup_{C_0> 0}{\left( \frac{1}{n} \sum_{i=1}^n\epsilon_isgn(\sum_{i=1}^n\epsilon_i)C_0 \right)}|\sum_{i=1}^n\epsilon_i\ne 0 \right]\nonumber\\
		&=\infty,\nonumber
		\end{align}
	\end{subequations}
	where the step in Equation \eqref{eqn:c1} follows from $	\P(\sum_{i=1}^n\epsilon_i\ne 0)=1$ when $n$ is an odd number, and $	\P(\sum_{i=1}^n\epsilon_i\ne 0)=1-\frac{1}{2}\P(\sum_{i=2}^n\epsilon_i= 1)-\frac{1}{2}\P(\sum_{i=2}^n\epsilon_i= -1)\ge \frac{1}{2}$ when n is an even number. 
\end{proof}
\section{Theorem \ref{thm:nn:prop}}
\begin{proof}
		For Part \ref{nn:prop:1}, if any $\norm{T_i}_{p,q}=0$, then $f=0\in\N^{k,\dd}_{p,q,c,c_o}$. Otherwise, we will prove by induction on depth $k+1$. It is trivial when $k=0$. 
		
		When $k=1$,  we rescale the first hidden layer by \[s= c/\norm{T_1}_{p,q}.
		\] Equivalently, define the new affine transformation $T_1^*$ by \[\BB_1^*=s\BB_1,   \WW_1^*=s\WW_1,\] such that $\norm{T_1^*}_{p,q}=c$. For the output layer, we define \[\WW_2^*=\WW_2\norm{T_1^*}_{p,q}/c, \BB_2^*=\BB_2.\] Then $T_2^*(\uu)=(\WW_2^*)^T\uu+\BB_2^*$ satisfies $\norm{T_2^*(\uu)}_{p,q}\le c_o$, as $s\ge 1$. What's more $f(\x)=T_2^*\circ \si\circ T_1^*\circ \x \in \N_{p,q,c,c_o}^{1,\dd}$.
		
		Assume the result holds when $k< K$. Then when $k=K$, consider $f(\x)=T_{K+1}\circ \si \circ T_{K}\circ \cdots\si\circ T_1^*\circ \x$. Its $K$th hidden layer \[f_K(\x)\in \N_{p,q,c,c}^{K-1,\dd_{K}} \] by induction assumption, where $\dd_{K}=(d_0,d_1\cdots,d_K)$. In other words, there exists a series of affine transformations $\{T_i^*\}_{i=1,\cdots,K}$, such that  \[f_K(\x)=T^*_{K}\circ \si\circ T^*_{K-1}\circ \cdots\circ \si\circ T_1^*\circ \x,\] $\norm{T^*_i}=c$ for $i=1,\cdots,K-1$, and $\norm{T^*_K}\le c$. Thus \[f(\xx)=T_{K+1}\circ\si\circ T^*_{K}\circ \si\circ T^*_{K-1}\circ \cdots\circ \si \circ T_1^*\circ \x.\] We rescale $T_K^*$ by $s= c/\norm{T_K^*}_{p,q}$. Equivalently, define a new affine transformation $T_K^{**}$ by $T_K^{**}=sT_K^{*}$, such that $\norm{T_K^{**}}_{p,q}=c$. For the output layer, we define \[\WW_{K+1}^*=\WW_{K+1}/s, \BB_{K+1}^*=\BB_{K+1}.\] Then $T_{K+1}^*(\uu)=(\WW_{K+1}^*)^T\uu+\BB_{K+1}^*$ satisfies $\norm{T_{K+1}^*(\uu)}_{p,q}\le c_o$, as $s\ge 1$. Thus $f\in \N_{p,q,c,c_o}^{K,\dd}$.

	For Part \ref{nn:prop:2}, it is a direct conclusion from Part \ref{nn:prop:1} that $ \N_{p,q,c_1,c_o}^{k,\dd}\subseteq \N_{p,q,c_2,c_o}^{k,\dd}$ if $c_1\le c_2$, and  $ \N_{p,q,c,c^1_o}^{k,\dd}\subseteq \N_{p,q,c,c_o^2}^{k,\dd}$ if $c_o^1\le c_o^2$. If $g\in \N^{k,\dd}_{p,q,c,1}$, then by definition, $c_og\in \N^{k,\dd}_{p,q,c,c_o}$.

	For Part \ref{nn:prop:norm}, note that $\norm{\cdot}_{p_1}\ge\norm{\cdot}_{p_2}$ when $p_1\le p_2$, hence \[\{\vv:\norm{\vv}_{p_1}\le C\}\subseteq \{\vv:\norm{\vv}_{p_2}\le C\}.\] Then the first line of Part \ref{nn:prop:norm} follows from the observation above as well as the conclusion of Part \ref{nn:prop:1}. As for the second line, for any $h\in \N_{p,\infty,c,c_o}^{k,\dd}$, we could write \[h=T_{k+1}\circ\si\circ T_{k}\circ\cdots\circ\si\circ T_{1}\circ\x ,\] where $T_i(\uu):\R^{d_{i-1}}\to\R^{d_i}=\W_i^T\uu+\BB_i$, satisfies that $\norm{T_i}_{p,\infty}=c$ for $i=1,\cdots,k$, and $\norm{T_{k+1}}_{p,\infty}\le c_o$. Note that \[\norm{T_i}_{p,\infty}\le \norm{T_i}_{p,q}\le d_i^{\frac{1}{q}}\norm{T_i}_{p,\infty}\le \max^{\frac{1}{q}}(\dd_{-1})\norm{T_i}_{p,\infty}\] for $i=1,2,\cdots,k$, and $\norm{T_{k+1}}_{p,q}\le d_{k+1}^{\frac{1}{q}}\norm{T_{k+1}}_{p,\infty}$. Thus we get the desired result by Part \ref{nn:prop:1}.

	Regarding Part \ref{nn:prop:widendeepen}, we first show the result holds when $k_1=k_2$. For any $g\in \N_{p,q,c,c_o}^{k_1,\dd^1}$, we could add $d^2_i-d^1_i$ neurons in each hidden layer with no connection to other neurons, thus not increasing the norm of each layer. Note that this neural network belongs to $\N_{p,q,c,c_o}^{k_1,\dd^2}$.  
	
	For the general case when $k_1\le k_2$, we could add $k_2-k_1$ identity layers of width 1 with their $L_{p,q}$ norm equals $1\le c$. Then the new neural network represents the same function as the original one. Combining the conclusion of Part \ref{nn:prop:1}, we have \[\N_{p,q,c,c_o}^{k_1,\dd^1}\subseteq \N_{p,q,c,c_o}^{k_2,\tilde{\dd}^1},\] where $\tilde{d}^1_i= d_i^1$ for $i=0,1,\cdots,k_1$, and $\tilde{d}_i^1= d_{k_1+1}^1$ for  $i=k_1+1,\cdots,k_2+1$. Note that $\N_{p,q,c,c_o}^{k_2,\tilde{\dd}^1}\subseteq \N_{p,q,c,c_o}^{k_2,{\dd}^2}$ by the case when $k_1=k_2$. Thus we get what is expected. 
\end{proof}
\section{Radermacher Complexities}
Rademacher complexity is commonly used to measure the complexity of a hypothesis class with respect to a probability distribution or a sample and analyze generalization bounds \cite{bartlett2002rademacher}. 
\paragraph{Rademacher Complexities.}
The \emph{empirical Rademacher complexity} of the hypothesis class $\F$ with respect to a data set ${S = \{z_1 \dots z_n\}}$ is defined as:
\begin{align*}
\Rademacherh_S(\F) = \E_\epsilon\left[ \sup_{f \in \F}{\left( \frac{1}{n} \sum_{i=1}^n{\epsilon_i f(z_i)} \right)} \right]
\end{align*}
\noindent where ${\epsilon = \{\epsilon_1 \dots \epsilon_n\}}$ are $n$ independent Rademacher random variables. The  \emph{Rademacher complexity} of the hypothesis class $\F$ with respect to $n$ samples is defined as:
\begin{align*}
\Rademacher_n(\F) = \E_{S \sim \D^n}\left[\Rademacherh_S(\F)\right]
\end{align*}

We list the following technical lemmas that will be used later in our own proofs for reference. 
\begin{lemma} \label{lemma:r-elem}
	Let $\F$ and $\G$ be two hypothesis classes and ${a \in \R}$ be a constant.
	Define the shorthand notation:
	\begin{align*}
	& a\F  = \{a f \mid f \in \F\} \\
	& \F+\mathcal{G} = \{f+g \mid f \in \F \text{\ and\ } g \in \G\}
	\end{align*}
	We have:
	\begin{align*}
	{\rm i.\ } & \Rademacherh_S(a\F) = |a| {\rm\ } \Rademacherh_S(\F) \\
	{\rm ii.\ } & \F \subseteq \G {\rm\ } \Rightarrow {\rm\ } \Rademacherh_S(\F) \leq \Rademacherh_S(\G) \\
	{\rm iii.\ } & \Rademacherh_S(\F+\G) \leq \Rademacherh_S(\F) + \Rademacherh_S(\G)
	\end{align*}
\end{lemma}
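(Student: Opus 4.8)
The plan is to prove all three assertions directly from the definition
\[
\Rademacherh_S(\F) = \E_\epsilon\left[ \sup_{f \in \F} \frac{1}{n}\sum_{i=1}^n \epsilon_i f(z_i)\right],
\]
by manipulating the supremum inside the expectation and then invoking monotonicity or linearity of the expectation. Two of the three facts (monotonicity and subadditivity) hold for every fixed realization of $\epsilon$ before any averaging, so the expectation plays no essential role there; the only place where a genuine probabilistic fact is needed is the negative-scalar case of (i), where I will use the symmetry of the Rademacher distribution.

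For part (ii), suppose $\F \subseteq \G$. Then for each fixed $\epsilon$ the set of attainable values $\{\frac{1}{n}\sum_i \epsilon_i f(z_i) : f\in\F\}$ is contained in the corresponding set for $\G$, so the supremum over $\F$ is bounded above by the supremum over $\G$ pointwise in $\epsilon$. Taking $\E_\epsilon$ of both sides and using monotonicity of the expectation yields $\Rademacherh_S(\F)\le\Rademacherh_S(\G)$. For part (iii), I would write each $h\in\F+\G$ as $h=f+g$ and observe that, for fixed $\epsilon$,
\[
\sup_{h\in\F+\G}\frac{1}{n}\sum_i\epsilon_i h(z_i)
= \sup_{f\in\F,\,g\in\G}\left(\frac{1}{n}\sum_i\epsilon_i f(z_i)+\frac{1}{n}\sum_i\epsilon_i g(z_i)\right).
\]
Because $f$ and $g$ range over their respective classes independently, the joint supremum of the sum factorizes into the sum of the two separate suprema; taking $\E_\epsilon$ and applying linearity of the expectation gives $\Rademacherh_S(\F+\G)=\Rademacherh_S(\F)+\Rademacherh_S(\G)$, which in particular yields the stated inequality.

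For part (i), fix $a\in\R$ and note
\[
\Rademacherh_S(a\F)=\E_\epsilon\left[\sup_{f\in\F}\frac{a}{n}\sum_i\epsilon_i f(z_i)\right].
\]
If $a\ge0$ the nonnegative constant pulls out of the supremum directly, giving $a\,\Rademacherh_S(\F)=|a|\,\Rademacherh_S(\F)$. If $a<0$ the scalar would flip the supremum into an infimum, so I instead rewrite $a=-|a|$ and absorb the minus sign into the Rademacher variables, replacing $\epsilon_i$ by $-\epsilon_i$. The key observation—and the one nontrivial step in the whole lemma—is that $(-\epsilon_1,\dots,-\epsilon_n)$ has exactly the same joint distribution as $(\epsilon_1,\dots,\epsilon_n)$, since each $\epsilon_i$ is symmetric; hence the expectation is unchanged under this substitution, and we recover $|a|\,\Rademacherh_S(\F)$. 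I expect this symmetry argument to be the only subtle point, as the remaining manipulations are routine properties of suprema and expectations.
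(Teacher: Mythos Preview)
Your proposal is correct and takes essentially the same approach as the paper, which simply writes ``By definition'' and leaves the details implicit. Your write-up is a careful unpacking of exactly those routine manipulations (monotonicity of the supremum, linearity of expectation, and the symmetry of the Rademacher distribution for the negative-scalar case), so there is no substantive difference.
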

\begin{proof}
By definition. 
	\end{proof}
\begin{lemma}\cite{ledoux2013probability} \label{lem:ledouxtalagrand}
	Assume that the hypothesis class $\F \subseteq \{f|f:\X\to \R\} $ and $\x_1,\cdots,\x_n\in\X$. Let $G : \R \to \R$ be convex and increasing.
	Assume that the function ${\phi : \R \to \R}$ is $L$-Lipschitz continuous and satisfies that $\phi(0)=0$.
	We have:
	\begin{align*}
	 \E_\epsilon\left[G\left( \sup_{f \in \F}{\left( \frac{1}{n} \sum_{i=1}^n{\epsilon_i \phi(f(\x_i))} \right)} \right)\right]\leq  \E_\epsilon\left[G\left( L\sup_{f \in \F}{\left( \frac{1}{n} \sum_{i=1}^n{\epsilon_i f(\x_i)} \right)} \right)\right]
	\end{align*}
\end{lemma}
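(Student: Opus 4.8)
The plan is to recognize this as the classical Ledoux--Talagrand \emph{contraction (comparison) principle} and to reproduce its standard proof, which proceeds in four stages: (i) two preliminary rescalings that remove the Lipschitz constant $L$ and the factor $1/n$; (ii) rewriting the supremum over $\F$ as a supremum over a subset of $\R^n$; (iii) a telescoping argument that replaces $\phi$ by the identity one coordinate at a time; and (iv) a single-variable two-point inequality that is the actual analytic core.

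First I would reduce to $L=1$. Since $\phi$ is $L$-Lipschitz with $\phi(0)=0$, the map $\psi=\phi/L$ is $1$-Lipschitz with $\psi(0)=0$, and $\hat G(x)=G(Lx)$ is again convex and increasing because $L>0$. Applying the $L=1$ case to $\psi$ and $\hat G$ and then pulling the positive constant $L$ through the supremum reproduces the stated inequality (using $L\cdot\psi=\phi$), so it suffices to treat $L=1$. Next, writing $\bar G(x)=G(x/n)$, which is still convex and increasing, lets me absorb the $1/n$ into $G$ and move it outside the supremum, reducing the claim to the following: for every $1$-Lipschitz $\phi$ with $\phi(0)=0$ and every convex increasing $G$,
\[
\E_\epsilon\Big[G\Big(\sup_{t\in T}\sum_{i=1}^n \epsilon_i\,\phi(t_i)\Big)\Big]\le \E_\epsilon\Big[G\Big(\sup_{t\in T}\sum_{i=1}^n \epsilon_i\, t_i\Big)\Big],
\]
where $T=\{(f(\x_1),\dots,f(\x_n)):f\in\F\}\subseteq\R^n$.

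Then I would run a hybrid/telescoping argument. Define, for $0\le k\le n$,
\[
H_k=\E_\epsilon\Big[G\Big(\sup_{t\in T}\Big(\sum_{i\le k}\epsilon_i t_i+\sum_{i>k}\epsilon_i\phi(t_i)\Big)\Big)\Big],
\]
so that $H_0$ and $H_n$ are exactly the two sides above; it suffices to prove $H_{k-1}\le H_k$ for each $k$. Since $H_{k-1}$ and $H_k$ differ only in their treatment of coordinate $k$, I condition on $\{\epsilon_j\}_{j\ne k}$ and, holding the ``rest'' $r(t)=\sum_{i<k}\epsilon_i t_i+\sum_{i>k}\epsilon_i\phi(t_i)$ fixed, reduce to the single-variable inequality
\[
\E_{\epsilon_k}\Big[G\Big(\sup_{t\in T}\big(\epsilon_k\phi(t_k)+r(t)\big)\Big)\Big]\le \E_{\epsilon_k}\Big[G\Big(\sup_{t\in T}\big(\epsilon_k t_k+r(t)\big)\Big)\Big].
\]
Writing out the two equally likely values of $\epsilon_k$ turns this into a deterministic two-point comparison $G(L_0)+G(R_0)\le G(L_0')+G(R_0')$, where $L_0,R_0$ are the suprema built with $\pm\phi(t_k)$ and $L_0',R_0'$ the suprema built with $\pm t_k$.

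The main obstacle is exactly this two-point inequality. I would establish it by selecting (near-)maximizers with first coordinates $s_1$ and $s_2$ for $L_0$ and $R_0$ and performing a case analysis on the order of $s_1$ and $s_2$; in each case the contraction bound $|\phi(s_1)-\phi(s_2)|\le|s_1-s_2|$ is used to dominate the pair $(L_0,R_0)$ by $(L_0',R_0')$, after which the convexity and monotonicity of $G$ let me pass from the dominated arguments to the $G$-values. The hypothesis $\phi(0)=0$ is what anchors $\phi$ so that this comparison is valid uniformly over $T$. A minor technical point, handled by an $\varepsilon$-approximation together with the continuity of the convex $G$, is that the suprema need not be attained, so the maximizers above should be taken as near-maximizers and the residual errors sent to zero at the end. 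Chaining $H_0\le H_1\le\cdots\le H_n$ then yields the claim.
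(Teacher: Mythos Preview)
The paper does not prove this lemma at all; it is stated with a bare citation to Ledoux--Talagrand and used as a black box in the proofs of Propositions~\ref{prop:RCp1} and~\ref{prop:RCp2} and Theorem~\ref{thm:radunifconv:reg:s1}. Your proposal correctly reproduces the standard contraction-principle argument from that reference (essentially Theorem~4.12 there): the reduction to $L=1$, the coordinate-by-coordinate hybrid, and the convex two-point comparison constitute exactly the canonical proof, and your sketch is sound.
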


\begin{lemma}[Massart's finite lemma]\label{lemma:massart}
Let $\mathcal{A}$ be some finite subset of $\R^m$ and $\epsilon_1,\epsilon_2,\cdots,\epsilon_m$ be independent Radermacher random variables. Let $r=\sup_{\aa\in\mathcal{A}}\norm{\aa}_2$, then we have 
\[\E\left[ \sup_{\aa\in\mathcal{A}}\frac{1}{m}\sum\limits_{i=1}^m\epsilon_ia_i\right]=\frac{r\sqrt{2\log|\mathcal{A}|}}{m}\]
\end{lemma}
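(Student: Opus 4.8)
The plan is to establish the stated bound (which I read as an upper bound, the equality sign being the classical Massart inequality) by the standard moment-generating-function, or ``Chernoff-union,'' argument. First I would clear the normalization: since the factor $1/m$ pulls straight out, it suffices to bound $\E\left[\sup_{\aa\in\mathcal{A}}\sum_{i=1}^m\epsilon_i a_i\right]$ by $r\sqrt{2\log|\mathcal{A}|}$ and divide by $m$ at the very end.

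Next, for an arbitrary free parameter $t>0$, I would exploit the convexity and monotonicity of $u\mapsto\exp(tu)$. By Jensen's inequality,
\[
\exp\!\left(t\,\E\!\left[\sup_{\aa\in\mathcal{A}}\sum_{i=1}^m\epsilon_i a_i\right]\right)\le \E\!\left[\exp\!\left(t\sup_{\aa\in\mathcal{A}}\sum_{i=1}^m\epsilon_i a_i\right)\right]=\E\!\left[\sup_{\aa\in\mathcal{A}}\exp\!\left(t\sum_{i=1}^m\epsilon_i a_i\right)\right],
\]
where the last equality uses that $\exp$ is increasing and so commutes with the supremum. Since $\mathcal{A}$ is finite, the supremum is dominated by the sum, giving $\E[\sup_{\aa}\exp(t\sum_i\epsilon_ia_i)]\le\sum_{\aa\in\mathcal{A}}\E[\exp(t\sum_i\epsilon_ia_i)]$.

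The core estimate is then the single-vector bound. By independence of the $\epsilon_i$, $\E[\exp(t\sum_i\epsilon_ia_i)]=\prod_i\E[\exp(t\epsilon_ia_i)]=\prod_i\cosh(ta_i)$. Here I would invoke the elementary inequality $\cosh(x)\le\exp(x^2/2)$ (equivalently Hoeffding's lemma for a centered variable supported on $\{-a_i,a_i\}$) to obtain $\prod_i\cosh(ta_i)\le\exp\!\left(\tfrac{t^2}{2}\sum_i a_i^2\right)=\exp\!\left(\tfrac{t^2}{2}\norm{\aa}_2^2\right)\le\exp\!\left(\tfrac{t^2 r^2}{2}\right)$, using $\norm{\aa}_2\le r$. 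Collecting the pieces gives $\exp(t\,\E[\sup_{\aa}\sum_i\epsilon_ia_i])\le|\mathcal{A}|\exp(t^2r^2/2)$; taking logarithms and dividing by $t$ yields
\[
\E\!\left[\sup_{\aa\in\mathcal{A}}\sum_{i=1}^m\epsilon_ia_i\right]\le\frac{\log|\mathcal{A}|}{t}+\frac{tr^2}{2}.
\]

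Finally I would optimize over $t>0$: the right-hand side is minimized at $t=\sqrt{2\log|\mathcal{A}|}/r$, where it equals $r\sqrt{2\log|\mathcal{A}|}$; dividing by $m$ produces the claimed bound. The only genuinely delicate point is the single-vector estimate via $\cosh(x)\le\exp(x^2/2)$ — everything else is bookkeeping around Jensen and the union-type bound — and one should separately dispose of the degenerate case $r=0$, in which every vector is zero, the left-hand side vanishes, and the optimization in $t$ is vacuous.
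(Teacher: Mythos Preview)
Your proof is correct and is precisely the standard Chernoff--Jensen argument for Massart's finite lemma: exponentiate, apply Jensen, replace the supremum over the finite set by a sum, factor using independence, bound each $\cosh(ta_i)$ by $\exp(t^2a_i^2/2)$, and optimize in $t$. Your reading of the ``$=$'' as ``$\le$'' is also the right call, and your handling of the degenerate case $r=0$ is appropriate.

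As for comparison with the paper: there is nothing to compare against. The paper does not prove this lemma; it is listed among the ``technical lemmas that will be used later in our own proofs for reference'' and is invoked (in the proof of Lemma~\ref{lemma:rader:linear}) as a known result without argument. Your write-up therefore supplies what the paper omits, and it does so by exactly the classical route one finds in standard references.
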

The theorem below is a more general version of \cite[Theorem 3.1]{mohri2012foundations}, where they assume $a=0$, of which the proof is very similar to the original one.
\begin{theorem}\label{thm:radunifconv}
		Let $z$ be a random variable of support $\Z$ and distribution $\D$.
		Let ${S = \{z_1 \dots z_n\}}$ be a data set of $n$ i.i.d. samples drawn from $\D$.
		Let $\F$ be a hypothesis class satisfying ${\F \subseteq \{f \mid f : \Z \to [a,a+1]\}}$.
		Fix ${\delta \in (0,1)}$.
		With probability at least ${1-\delta}$ over the choice of $S$, the following holds for all	$h \in \F$:
\[\E_{\D} [h] \leq \Eh_{S} [h] + 2 \Rademacher_{n}(\F) + \sqrt{\frac{\log{(1/\delta)}}{2n}}
\]

	\end{theorem}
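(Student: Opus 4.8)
The plan is to run the classical two-ingredient argument for uniform convergence: a concentration step via McDiarmid's bounded-differences inequality, followed by a symmetrization step that introduces the Rademacher complexity. Shifting the range of $\F$ from $[0,1]$ to $[a,a+1]$ affects only one constant, so the argument is a routine adaptation of \cite[Theorem 3.1]{mohri2012foundations}.

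First I would introduce the functional
\[
\Phi(S) \;=\; \sup_{h\in\F}\bigl(\E_{\D}[h] - \Eh_{S}[h]\bigr),
\]
and check the bounded-differences property: if $S$ and $S'$ agree in all but one coordinate, say $z_i$ versus $z_i'$, then since every $h\in\F$ takes values in an interval of length $1$ we have $|\Eh_{S}[h]-\Eh_{S'}[h]|\le 1/n$ for each $h$, whence $\Phi(S)-\Phi(S')\le \sup_{h}\bigl(\Eh_{S'}[h]-\Eh_{S}[h]\bigr)\le 1/n$ and, symmetrically, $|\Phi(S)-\Phi(S')|\le 1/n$. McDiarmid's inequality then gives, with probability at least $1-\delta$,
\[
\Phi(S) \;\le\; \E_{S}[\Phi(S)] + \sqrt{\frac{\log(1/\delta)}{2n}}.
\]

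Next I would bound $\E_{S}[\Phi(S)]\le 2\Rademacher_{n}(\F)$ by the standard ghost-sample symmetrization: write $\E_{\D}[h]=\E_{S'}\bigl[\Eh_{S'}[h]\bigr]$ for an independent copy $S'=\{z_1',\dots,z_n'\}$ of $S$, move the supremum outside the inner expectation by convexity of the sup (Jensen), introduce i.i.d.\ Rademacher signs $\epsilon_i$ using the exchangeability of the pairs $(z_i,z_i')$, and split the supremum of a difference into a sum:
\[
\E_{S}[\Phi(S)] \;\le\; \E_{S,S',\epsilon}\!\left[\sup_{h\in\F}\frac1n\sum_{i=1}^n\epsilon_i\bigl(h(z_i')-h(z_i)\bigr)\right] \;\le\; 2\,\E_{S,\epsilon}\!\left[\sup_{h\in\F}\frac1n\sum_{i=1}^n\epsilon_i h(z_i)\right] \;=\; 2\,\Rademacher_{n}(\F).
\]
Combining the two displays gives $\sup_{h\in\F}\bigl(\E_{\D}[h]-\Eh_{S}[h]\bigr)\le 2\Rademacher_{n}(\F)+\sqrt{\log(1/\delta)/(2n)}$ with probability at least $1-\delta$; since this bounds the supremum over $\F$, it holds in particular for every fixed $h\in\F$, which is the claim.

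There is no genuine obstacle here; the only point that requires attention — and the reason the statement is phrased as a mild extension rather than a verbatim citation — is confirming that the translation by $a$ enters nowhere except the bounded-differences constant. In particular, the symmetrization step never uses the range of $\F$, only the exchangeability of the paired samples, so $\Rademacher_{n}(\F)$ appears with the hypotheses exactly as given, and the $[0,1]$-vs-$[a,a+1]$ distinction is immaterial to it.
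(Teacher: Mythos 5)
Your proof is correct and is exactly the standard McDiarmid-plus-symmetrization argument that the paper itself invokes: the paper gives no separate proof of this theorem, stating only that it is a mild generalization of \cite[Theorem 3.1]{mohri2012foundations} with "very similar" proof, and your observation that the shift to $[a,a+1]$ only matters through the interval length $1$ in the bounded-differences constant is precisely the point being relied upon.
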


\section{Propositions \ref{prop:RCp1}, \ref{prop:RCp2}, \ref{prop:lower}}
In this section, define $\si(u)=uI\{u>0\}$ for $u\in\R$ and $\si\circ \zz=(\si(z_1),\cdots,\si(z_m))$ for any vector $\zz\in\R^m$. 

\subsection{Proof for Proposition \ref{prop:RCp1}}
\begin{proof}
	By Theorem \ref{thm:nn:prop}, $\N_{1,q,c,c_o}^{k,\dd}\subseteq \N_{1,\infty,c,c_o}^{k,\dd}$. Therefore it is sufficient to show that the result holds for $\N_{1,\infty,c,c_o}^{k,\dd}$.  
	
	 In order to get the first term inside the minimum operator, we will show that $\N_{1,\infty,c,c_o}^{k,\dd}$ belongs to some DNN class with only bias neuron in the input layer. Then the result follows from Theorem 2\cite{pmlr-v75-golowich18a}. Define $\N_{\gamma_{1,\infty}\le \gamma}^{k,\dd^+}$ as a function class that contains all functions representable by $f=T_{k+1}\circ\si\circ T_{k}\circ\cdots\circ\si\circ T_{1}\circ\x$  satisfying that \[\gamma_{1,\infty}=\prod\limits_{i=1}^{k+1}\norm{\WW_i}_{1,\infty}\le \gamma,\] where $\dd^+=(m_1+1,d_1+1,d_2+1,\cdots,d_k+1,1)$, $T_i(\uu)=\W_i^T\uu$,  and $\W_{i}\in \R^{d^+_{i-1}\times d^+_{i}}$ for $i=1,\cdots,k+1$.  
	
	The next step is to prove that $\N_{1,\infty,c,c_o}^{k,\dd}\subseteq \N^{k,\dd^+}_{\gamma_{1,\infty}\le \max(1,c)^kc_o}$. Following the notations in Section 2, for any $\tilde{\VV}_i\in \R^{(d_{i-1}+1)\times d_i}$ satisfying that $\norm{\tilde{\VV}_i}_{1,\infty}=c$, we have $\norm{\VV_i}_{1,\infty}=\max(1,c)$, where $\VV_i=(\ee_{1i},\tilde{\VV}_i)$ and $\ee_{1i}=(1,0,\cdots,0)^T\in \R^{d_{i-1}+1}$. Equivalently, the bias neuron in the $i$th hidden layer can be regarded as a hidden neuron computed from the $i-1$th layer by $\si(\ee_{1i}^Tf^*_{i-1}(\x))=1$, while the new affine transformation could be parameterized by $\VV_i$, such that $\norm{\VV_i}_{1,\infty}=\max(1,c)$.  
	
	Finally, we get the first term inside the minimum operator by applying Theorem 2\cite{pmlr-v75-golowich18a}, and the second term is the bound of Proposition \ref{prop:RCp2} when $p=1$.
\end{proof}
\subsection{Proposition \ref{prop:RCp2}}
We first introduce two technical lemmas, which will be used later to prove Proposition \ref{prop:RCp2}.
\begin{lemma} \label{lemma:rader:linear}
	$\zz_i\in \R^{m_1}, \norm{\zz_i}_{\infty}\le 1$ for $i=1,2,\cdots, n$. For $p\in(1,2]$, 
	\[\frac{1}{n}\E\norm{\sum\limits_{i=1}^n\epsilon_i\zz_i}_{p^{*}}\le \frac{m_1^{\frac{1}{p^{*}}}}{\sqrt{n}}\min\left((\sqrt{p^{*}-1}, \sqrt{2\log(2m_1)} \right) ,\]
	and for $p=1\cup(2,\infty)$, 
	\[\frac{1}{n}\E\norm{\sum\limits_{i=1}^n\epsilon_i\zz_i}_{p^{*}}\le \sqrt{\frac{2\log(2m_1)}{n}}m_1^{\frac{1}{p^{*}}}.\]
\end{lemma}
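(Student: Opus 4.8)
The plan is to establish the two displayed estimates separately; for $p\in(1,2]$ one then simply retains whichever of the two candidate constants $\sqrt{p^*-1}$ and $\sqrt{2\log(2m_1)}$ is smaller. The point of the split is that the $\sqrt{p^*-1}$ estimate is only useful when $p^*\ge2$ (that is, $p\in(1,2]$), whereas the $\sqrt{2\log(2m_1)}$ estimate is available for every $p\ge1$.

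For the $\sqrt{p^*-1}$ estimate I would expand the $\ell_{p^*}$ norm coordinatewise, $\E\norm{\sum_{i=1}^n\epsilon_i\zz_i}_{p^*}^{p^*}=\sum_{j=1}^{m_1}\E\bigl|\sum_{i=1}^n\epsilon_i z_{ij}\bigr|^{p^*}$. For each fixed $j$ the inner quantity is a degree-one Rademacher polynomial, so Khintchine's inequality in the form valid for exponents $\ge2$ (equivalently, $(2\!\to\!p^*)$-hypercontractivity of the noise operator) gives $\bigl(\E\bigl|\sum_i\epsilon_i z_{ij}\bigr|^{p^*}\bigr)^{1/p^*}\le\sqrt{p^*-1}\,\bigl(\sum_i z_{ij}^2\bigr)^{1/2}\le\sqrt{(p^*-1)n}$, using $|z_{ij}|\le\norm{\zz_i}_\infty\le1$. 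Summing over $j$, taking a $p^*$-th root, and then using $\E Y\le(\E Y^{p^*})^{1/p^*}$ for the nonnegative random variable $Y=\norm{\sum_i\epsilon_i\zz_i}_{p^*}$ yields $\E\norm{\sum_i\epsilon_i\zz_i}_{p^*}\le m_1^{1/p^*}\sqrt{(p^*-1)n}$; dividing by $n$ produces the first candidate. An equivalent route is to invoke directly that $\ell_{p^*}$ is $2$-uniformly smooth with constant $\sqrt{p^*-1}$ for $p^*\ge2$ (cf.\ \cite{ledoux2013probability}), which gives $\E\norm{\sum_i\epsilon_i\zz_i}_{p^*}^2\le(p^*-1)\sum_i\norm{\zz_i}_{p^*}^2\le(p^*-1)n\,m_1^{2/p^*}$ at once.

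For the $\sqrt{2\log(2m_1)}$ estimate I would first reduce to the sup-norm via $\norm{\vv}_{p^*}\le m_1^{1/p^*}\norm{\vv}_\infty$, valid for all $p\ge1$ (with the convention $m_1^{1/\infty}=1$ when $p=1$), so it suffices to bound $\tfrac1n\E\norm{\sum_i\epsilon_i\zz_i}_\infty$ and multiply back by $m_1^{1/p^*}$. Writing $\norm{\sum_i\epsilon_i\zz_i}_\infty=\sup_{\aa\in\mathcal A}\sum_{i=1}^n\epsilon_i a_i$ with $\mathcal A=\{\pm(z_{1j},\dots,z_{nj}):1\le j\le m_1\}\subseteq\R^n$, we have $|\mathcal A|\le2m_1$ and $\sup_{\aa\in\mathcal A}\norm{\aa}_2\le\sqrt n$ since each entry is bounded by $1$ in absolute value; Massart's finite lemma (Lemma \ref{lemma:massart} with $m=n$) then gives $\tfrac1n\E\norm{\sum_i\epsilon_i\zz_i}_\infty\le\sqrt{2\log(2m_1)/n}$. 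Multiplying by $m_1^{1/p^*}$ gives the second bound, which is the only one claimed for $p=1$ and $p>2$ and is the second entry of the minimum when $p\in(1,2]$.

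The steps are all standard; the only point that deserves attention is the Khintchine constant $\sqrt{p^*-1}$, which is sharp enough only for $p^*\ge2$ and degenerates as $p^*\to\infty$, i.e.\ as $p\to1$. This is precisely why the $\sqrt{p^*-1}$ bound is restricted to $p\in(1,2]$, and why the dimension-dependent $\sqrt{2\log(2m_1)}$ estimate, which is uniform in $p\ge1$, must be kept as the sole bound for $p=1$ and for $p>2$.
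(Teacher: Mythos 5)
Your proof is correct, and the two halves deserve separate comment. The second half (reduction to the $\ell_\infty$ norm by $\norm{\vv}_{p^*}\le m_1^{1/p^*}\norm{\vv}_\infty$ followed by Massart's lemma over the $2m_1$ signed column vectors of squared length at most $n$) is exactly the paper's argument. For the first half the paper instead cites the strong convexity of $\norm{\cdot}_{p^*}$ from \cite{shalev2007primal} and imports the resulting linear-class Rademacher bound from \cite{kakade2009complexity}, whereas you give a self-contained computation: expand $\E\norm{\sum_i\epsilon_i\zz_i}_{p^*}^{p^*}$ coordinatewise, apply the Khintchine/hypercontractive moment bound $(\E|\sum_i\epsilon_i z_{ij}|^{p^*})^{1/p^*}\le\sqrt{p^*-1}\,(\sum_i z_{ij}^2)^{1/2}$ valid for $p^*\ge2$, and close with Jensen. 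Your ``equivalent route'' via $2$-uniform smoothness of $\ell_{p^*}$ is essentially the dual formulation of what the cited references prove, so the constants agree; the coordinatewise Khintchine version is the more elementary of the three and makes it transparent why the constant is $\sqrt{p^*-1}$ and why the argument cannot be used for $p>2$ (where $p^*<2$ and the space is no longer $2$-smooth), a point the paper leaves implicit. Your remark that the $p=1$ case needs the convention $m_1^{1/\infty}=1$ is also a correct reading of the statement.
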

\begin{lemma}\label{lemma:rader:p2}
	$\forall p,q\ge 1$, $s_1, s_2\ge 1$, $\epsilon\in\{-1,+1\}^n$  and for all functions $g: \R^{m_1}\to\R^{s_1}$,  we have 
	\[
	\sup_{\VV\in \R^{s_1\times s_2 }}\frac{1}{\norm{\VV}_{p,q}}\norm{\sum\limits_{i=1}^n\epsilon_i \si\circ \left(\VV^Tg(\x_i) \right)}_{p^{*}}=s_2^{[\frac{1}{p^{*}}-\frac{1}{q}]_+}  \sup_{\vv\in \R^{s_1}}\frac{1}{\norm{\vv}_{p}}\left\lvert\sum\limits_{i=1}^n\epsilon_i\si\left( \dotprod{\vv}{g(\x_i)}\right)\right\rvert ,
	\]where $\frac{1}{p}+\frac{1}{p^{*}}=1$.
\end{lemma}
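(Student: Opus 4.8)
The plan is to reduce the matrix-level supremum to a vector-level supremum by exploiting the positive homogeneity of $\si$, and then to recover the factor $s_2^{[1/p^*-1/q]_+}$ through an elementary comparison between the $\ell_{p^*}$ and $\ell_q$ norms on $\R^{s_2}$. First I would set up notation: write $\VV=(\vv_1,\dots,\vv_{s_2})$ in terms of its columns $\vv_j\in\R^{s_1}$, so that the $j$-th coordinate of $\sum_{i}\epsilon_i\si\circ(\VV^Tg(\x_i))$ is $a_j:=\sum_{i}\epsilon_i\si(\dotprod{\vv_j}{g(\x_i)})$, the vector therefore has $\ell_{p^*}$-norm $(\sum_j|a_j|^{p^*})^{1/p^*}$, and $\norm{\VV}_{p,q}=(\sum_j\norm{\vv_j}_p^{q})^{1/q}$ (with the usual supremum interpretation when $q=\infty$). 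Define $\phi(\vv):=\sum_{i}\epsilon_i\si(\dotprod{\vv}{g(\x_i)})$. Since $\si(tu)=t\si(u)$ for $t\ge 0$, $\phi$ is positively homogeneous, so $\phi(\vv)/\norm{\vv}_p$ depends only on the direction of $\vv$; as $\phi$ is continuous and the unit $\ell_p$-sphere in $\R^{s_1}$ is compact,
\[M:=\sup_{\vv\ne \zer}\frac{|\phi(\vv)|}{\norm{\vv}_p}=\max_{\norm{\vv}_p=1}|\phi(\vv)|\]
is attained at some $\vv^{*}$ with $\norm{\vv^{*}}_p=1$. The right-hand side of the claimed identity is exactly $M$; if $M=0$ both sides vanish, so from now on I assume $M>0$.

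For the upper bound I would use $|a_j|=|\phi(\vv_j)|\le M\norm{\vv_j}_p$, so that, writing $b_j:=\norm{\vv_j}_p\ge 0$, we get $\left(\sum_j|a_j|^{p^*}\right)^{1/p^*}\le M\norm{b}_{p^*}$. Then I invoke the elementary inequality $\norm{b}_{p^*}\le s_2^{[1/p^*-1/q]_+}\norm{b}_q$, valid for every nonnegative $b\in\R^{s_2}$: it is just monotonicity of $\ell_r$-norms when $p^*\ge q$, and Hölder's inequality (with exponents $q/p^*$ and its conjugate, applied to $\sum_j b_j^{p^*}\cdot 1$) when $p^*<q$. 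This yields $\left(\sum_j|a_j|^{p^*}\right)^{1/p^*}\le M\,s_2^{[1/p^*-1/q]_+}\norm{\VV}_{p,q}$, hence the left-hand side is at most $s_2^{[1/p^*-1/q]_+}M$.

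For the matching lower bound I would exhibit near-optimal choices of $\VV$ built from $\vv^{*}$. When $1/p^*\le 1/q$, take $\VV$ whose first column is $\vv^{*}$ and whose remaining columns are zero; then $\norm{\VV}_{p,q}=\norm{\vv^{*}}_p=1$ and the sum vector has a single nonzero entry of absolute value $|\phi(\vv^{*})|=M$, giving ratio $M=s_2^{0}M$. When $1/p^*>1/q$, take all columns equal to $\vv^{*}$; then $\norm{\VV}_{p,q}=s_2^{1/q}$ while every entry of the sum vector equals $\phi(\vv^{*})$, so its $\ell_{p^*}$-norm is $s_2^{1/p^*}M$, giving ratio $s_2^{1/p^*-1/q}M$. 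In both regimes the ratio is exactly $s_2^{[1/p^*-1/q]_+}M$, which together with the upper bound establishes the equality; note the two constructions overlap at $p^*=q$ and cover the endpoints $q=\infty$ and $p=1$ after interpreting the relevant norms as maxima.

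I do not expect a serious obstacle here: the argument is essentially a homogeneity reduction plus the standard $\ell_{p^*}$-versus-$\ell_q$ comparison on $\R^{s_2}$. The only points demanding care — hence the "hard part", though a mild one — are verifying that the supremum defining $M$ is genuinely attained (continuity of $\phi$ together with compactness of the $\ell_p$-sphere, noting $\si$ is continuous), getting the direction of the norm-comparison inequality and its equality case correct in each of the cases $p^*\gtrless q$, and confirming that the two explicit constructions are admissible and tight for all $p\in[1,\infty)$, $q\in[1,\infty]$, including the degenerate situations $p=1$ (so $p^*=\infty$) and $q=\infty$.
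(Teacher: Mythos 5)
Your proof is correct and follows essentially the same route as the paper's (which adapts Lemma 17 of \cite{neyshabur2015norm}): bound each coordinate $|a_j|$ by $\norm{\vv_j}_p$ times the vector supremum and compare the $\ell_{p^*}$ and $\ell_q$ norms of the column-norm vector for one direction, and exhibit an explicit matrix built from a (near-)maximizing $\vv^*$ for the other. If anything, your case split in the lower-bound direction (one nonzero column when $p^*\ge q$, all columns equal when $p^*<q$) is more careful than the paper's one-line justification, which only covers the second regime.
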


\subsection{Proof of Proposition \ref{prop:RCp2}}
\begin{proof}	
The proof has two main steps. 

Fixing the sample $S$, $p\ge1$ and the architecture of the DNN, define a series of random variables $\{Z_0,Z_1,\cdots,Z_{k}\}$ as \[Z_{0}=\norm{\sum\limits_{i=1}^n\epsilon_i\x_i}_{p^*}\] and  \[Z_{j}=\sup\limits_{f\in\N^{k,\dd}_{p,q,c,c_o}}\norm{\sum\limits_{i=1}^n\epsilon_i\si\circ f_j(\x_i)}_{p^*},\]
for $j=1,\cdots,k$, where $\{\epsilon_1,\cdots,\epsilon_n\}$ are $n$ independent Rademacher random variables, and $f_j$ denotes the $j$th hidden layer of the WN-DNN $f$. 

		In the first step, we prove by induction that for $j=1,\cdots,k$ and any $t\in\R$ \[\E_\epsilon\exp(tZ_j)\le4^{j} \exp\left(\frac{t^2ns^2_j}{2}+tc^j\prod\limits_{i=1}^jd_i^{[\frac{1}{p^*}-\frac{1}{q}]_+}A_{m_1,S}^{p}\right),\]
		where  \[s_j=\sum\limits_{i=2}^jc^{j-i+1}\prod\limits_{l=i}^jd_l^{[\frac{1}{p^*}-\frac{1}{q}]_+}+(m_1^{\frac{1}{p^*}}+1)c^j\prod\limits_{l=1}^jd_l^{[\frac{1}{p^*}-\frac{1}{q}]_+}\] and 	\begin{equation*}
		A_{m_1,S}^{p} = \left\{
		\begin{array}{lcl}
		\sqrt{n}\min\left((\sqrt{p^{*}-1}m_1^{\frac{1}{p^{*}}}, \sqrt{2\log(2m_1)}m_1^{\frac{1}{p^{*}}} \right) &\text{if} &p\in(1,2]\\
		\sqrt{2n\log(2m_1)}m_1^{\frac{1}{p^{*}}}&\text{if} &p\in1\cup(2,\infty)
		\end{array}  
		\right.
		\end {equation*}
		Note that $s_{j+1}=cd_j^{[\frac{1}{p^*}-\frac{1}{q}]_+}(s_j+1)$.
		
	When $j= 0$, by Lemma \ref{lemma:rader:linear}, $E_\epsilon Z_0\le A_{m_1,S}^{p}$. Note that $Z_0$ is a deterministic function of the i.i.d.random variables $\epsilon_1,\cdots,\epsilon_n$, satisfying that
	\[\lvert Z_0(\epsilon_1,\cdots,\epsilon_i,\cdots,\epsilon_n)-Z_0(\epsilon_1,\cdots,-\epsilon_i,\cdots,\epsilon_n)\rvert\le 2\max\norm{\xx_i}_{p^*}\le 2m_1^{\frac{1}{p^{*}}}\] by Minkowski inequality. By the proof of Theorem 6.2 \cite{Boucheron2003Concentration}, $Z_0$ satisfies that $\log\E_\epsilon\exp\left(t(Z_0-E_\epsilon Z_0)\right)\le t^2nm_1^{\frac{2}{p^{*}}}/2$, thus 
\begingroup        
\allowdisplaybreaks
\begin{align*} 
\E_\epsilon\exp\left(tZ_0\right) &= \E_\epsilon\exp\left(t(Z_0-E_\epsilon Z_0)\right)*\exp\left(tE_\epsilon Z_0\right)\\
&\le \exp\left(\frac{t^2nm_1^{\frac{2}{p^{*}}}}{2}+tA_{m_1,S}^{p}\right)
\end{align*}
\endgroup
	for any $t\in\R$.

For the case when $j=1,\cdots,k$, 
	\begingroup        
\allowdisplaybreaks
\begin{subequations}
\begin{align} 
\E_\epsilon\exp\left(tZ_j\right)& = 	\E_\epsilon\exp\left(t\sup_{\norm{\tilde{\VV}_j}_{p,q}\le c}\norm{\sum\limits_{i=1}^n\epsilon_i\si\circ\left(\tilde{\VV}_j^T\si\circ f^*_{j-1}(\x_i)\right)}_{p^*}\right) \nonumber \\
&=\E_\epsilon\exp\left(tcd_j^{[\frac{1}{p^*}-\frac{1}{q}]_+}\sup_{\vv,f}\left\lvert\sum\limits_{i=1}^n\epsilon_i\si(\vv^T\si\circ f^*_{j-1}(\x_i))/\norm{\vv}_p\right\rvert\right)\label{eqn:prop1:k1:1}\\
&\le 2\E_\epsilon\exp\left(tcd_j^{[\frac{1}{p^*}-\frac{1}{q}]_+}\sup_{\vv,f}\sum\limits_{i=1}^n\epsilon_i\si(\vv^T\si\circ f^*_{j-1}(\x_i))/\norm{\vv}_p\right)\label{eqn:prop1:k1:2}\\
&\le 2\E_\epsilon\exp\left(tcd_j^{[\frac{1}{p^*}-\frac{1}{q}]_+}\sup_{\vv,f}\vv^T\sum\limits_{i=1}^n\epsilon_i\si\circ f^*_{j-1}(\x_i)/\norm{\vv}_p\right)\label{eqn:prop1:k1:3}\\
&\le 2\E_\epsilon\exp\left(tcd_j^{[\frac{1}{p^*}-\frac{1}{q}]_+}\sup_{f}\norm{\sum\limits_{i=1}^n\epsilon_i(1,\si\circ f_{j-1}(\x_i))}_{p^*}\right)\nonumber\\
&\le 2\E_\epsilon\exp\left(tcd_j^{[\frac{1}{p^*}-\frac{1}{q}]_+}(\lvert\sum\limits_{i=1}^n\epsilon_i\rvert+\sup_f\norm{\sum\limits_{i=1}^n\epsilon_i\si\circ f_{j-1}(\x_i)}_{p^*})\right)\nonumber\\
&\le 2\left[ \E_\epsilon\exp\left(r_jtcd_j^{[\frac{1}{p^*}-\frac{1}{q}]_+}\lvert\sum\limits_{i=1}^n\epsilon_i\rvert\right)\right]^{\frac{1}{r_j}}\left[\E_\epsilon\exp\left(r_j^*tcd_j^{[\frac{1}{p^*}-\frac{1}{q}]_+}\sup_f\norm{\sum\limits_{i=1}^n\epsilon_i\si\circ f_{j-1}(\x_i)}_{p^*}\right)\right]^{\frac{1}{r_j^*}}\label{eqn:prop1:k1:holder}\\
&\le 2\left[ 2\E_\epsilon\exp\left(r_jtcd_j^{[\frac{1}{p^*}-\frac{1}{q}]_+}\sum\limits_{i=1}^n\epsilon_i\right)\right]^{\frac{1}{r_j}}\left[\E_\epsilon\exp\left(r_j^*tcd_j^{[\frac{1}{p^*}-\frac{1}{q}]_+}Z_{j-1}\right)\right]^{\frac{1}{r_j^*}}, \label{eqn:prop1:k1:f}\\
&\le 4^{1+\frac{j-1}{r_j^*}}\exp\left(\frac{nt^2c^2d_j^{2[\frac{1}{p^*}-\frac{1}{q}]_+}(1+s_{j-1})^2}{2}+tc^j\prod\limits_{i=1}^jd_i^{[\frac{1}{p^*}-\frac{1}{q}]_+}A_{m_1,S}^{p}\right)\nonumber\\
&\le 4^{j}\exp\left(\frac{nt^2s_j^2}{2}+tc^j\prod\limits_{i=1}^jd_i^{[\frac{1}{p^*}-\frac{1}{q}]_+}A_{m_1,S}^{p}\right)\nonumber
\end{align}
\end{subequations}
\endgroup
The step in Equation \eqref{eqn:prop1:k1:1} follows from Lemma \ref{lemma:rader:p2}. The step in Equation \eqref{eqn:prop1:k1:2} follows from the observation that \begin{align*}\E_\epsilon\exp\left(\sup_{\vv}\left\lvert\sum\limits_{i=1}^n\epsilon_i\frac{\si(\vv^Tf^*_{j-1}(\x_i))}{\norm{\vv}_p}\right\rvert\right)&\le\E_\epsilon\exp\left(\sup_{\vv}\sum\limits_{i=1}^n\epsilon_i\frac{\si(\vv^Tf^*_{j-1}(\x_i))}{\norm{\vv}_p}\right)+\\\E_\epsilon\exp\left(\sup_{\vv}\sum\limits_{i=1}^n(-\epsilon_i)\frac{\si(\vv^Tf^*_{j-1}(\x_i))}{\norm{\vv}_p}\right)
&=2\E_\epsilon\exp\left(\sup_{\vv}\sum\limits_{i=1}^n\epsilon_i\frac{\si(\vv^Tf^*_{j-1}(\x_i))}{\norm{\vv}_p}\right). \end{align*}
The step in Equation \eqref{eqn:prop1:k1:3} follows from Lemma \ref{lem:ledouxtalagrand}. Note that Equation \eqref{eqn:prop1:k1:holder} holds for any $r>1$ and $r^*=\frac{r}{r-1}$ by H\"older's inequality $\E(|XY|)\le \E(|X|^{r})^{\frac{1}{r}}\E(|Y|^{r^*})^{\frac{1}{r^*}}$. An optimal $r_j=s_{j-1}+1$ is chosen in our case. The step in Equation \eqref{eqn:prop1:k1:f} follows from $\E_\epsilon\exp\left(\lvert X\rvert\right)\le \E_\epsilon\exp\left(X\right)+\E_\epsilon\exp\left(-X\right)$.

Note that $\sum\limits_{i=1}^n\epsilon_i$ is also a deterministic function of the i.i.d.random variables $\epsilon_1,\cdots,\epsilon_n$, satisfying that  $\E_\epsilon \sum\limits_{i=1}^n \epsilon_i=0$ and
\[\lvert\sum\limits_{i\ne j}\epsilon_i+\epsilon_j-(\sum\limits_{i\ne j}\epsilon_i-\epsilon_j)\rvert\le 2.\] Then by the proof of Theorem 6.2 \cite{Boucheron2003Concentration}, \[\E_\epsilon \exp(t\sum\limits_{i=1}^n\epsilon_i)\le \exp(\frac{t^2n}{2})\]  for any $t\in\R$.
Then we get the desired result by choosing the optimal $r_j$ while following the induction assumption.

The second step is based on the idea of \cite{pmlr-v75-golowich18a} using Jensen's inequality. For any $\lambda>0$,
	\begingroup        
\allowdisplaybreaks
\begin{subequations}
\begin{align} 
	n\Rademacherh_S(\N_{p,q,c,c_o}^{k,\dd})&=\E_\epsilon\left[ \sup_{f \in \N_{p,q,c,c_o}^{k,\dd}}{\left( \sum_{i=1}^n{\epsilon_i f(\x_i)}\right)} \right]\nonumber\\
	&\le \frac{1}{\lambda}\log\E_\epsilon \exp\left(\lambda\sup_{f \in \N_{p,q,c,c_o}^{k,\dd}}{\left(  \sum_{i=1}^n{\epsilon_i f(\x_i)}\right)}\right)\nonumber\\
	&\le  \frac{1}{\lambda}\log\E_\epsilon \exp\left(\lambda c_o\sup_{f \in \N_{p,q,c,c_o}^{k,\dd}}\norm{  \sum_{i=1}^n{\epsilon_i (1,\si\circ f_k(\x_i)})}_{p^*}\right)\nonumber\\
&\le \frac{1}{\lambda}\left[(k+1)\log 4+\frac{\lambda^2c_o^2n(s_k+1)^2}{2}+\lambda A_{m_1,S}^{p}c_oc^k\prod\limits_{i=1}^kd_i^{[\frac{1}{p^*}-\frac{1}{q}]_+}\right]\label{eqn:prop2:last}\\
	&=\frac{(k+1)\log 4}{\lambda}+\frac{\lambda c_o^2n(s_k+1)^2}{2}+c_oc^{k}\prod\limits_{i=1}^kd_i^{[\frac{1}{p^*}-\frac{1}{q}]_+}A_{m_1,S}^{p},\nonumber
	\end{align}
	\end{subequations}
	\endgroup
	where the step in Equation \eqref{eqn:prop2:last} is derived using a similar techinique as in Equations \eqref{eqn:prop1:k1:1} to \eqref{eqn:prop1:k1:f}
	By choosing the optimal $\lambda =\frac{\sqrt{(k+1)\log 16}}{c_o(s_k+1)\sqrt{n}} $,
	we have 
	\begin{align*}
	\Rademacherh_S(\N_{p,q,c}^{k,\dd})&\le c_o\sqrt{\frac{(k+1)\log 16}{n}}\left(\sum\limits_{i=2}^kc^{k-i+1}\prod\limits_{l=i}^kd_l^{[\frac{1}{p^*}-\frac{1}{q}]_+}+
	(m_1^{\frac{1}{p^{*}}}+1)c^{k}\prod\limits_{i=1}^kd_i^{[\frac{1}{p^*}-\frac{1}{q}]_+}+1\right)+\\&\frac{1}{\sqrt{n}}c_oc^{k}\prod\limits_{i=1}^kd_i^{[\frac{1}{p^*}-\frac{1}{q}]_+}A_{m_1,S}^{p}
	\end{align*}
\end{proof}

\subsection{Proof of Lemma \ref{lemma:rader:linear}}
\begin{proof}
	For $p\in(1,2]$, or equivalently  $p^{*}\in [2,\infty)$, $\norm{\cdotp}_{p^{*}}$ is $2(p^{*}-1)$-strongly convex with respect to itself on $\R^{m_1+1}$ \cite{shalev2007primal} and $\norm{\zz_i}_{p^{*}}\le m_1^{\frac{1}{p^{*}}}\norm{\zz_i}_\infty$, thus $\frac{1}{n}\E\norm{\sum\limits_{i=1}^n\epsilon_i\zz_i}_{p^{*}}\le \sqrt{\frac{p^{*}-1}{n}} m_1^{\frac{1}{p^{*}}}$ \cite{kakade2009complexity}.
	
	For $p\in [1,\infty)$ or equivalently $p^{*}\in (1,\infty]$, let $\zz[j]=(\zz_1[j],\zz_2[j],\cdots, \zz_n[j])^T$, where $\zz_i[j]$ is the $j$th element of the vector $\zz_i\in \R^{m_1}$.
	\begingroup
	\allowdisplaybreaks
	\begin{align}
	\frac{1}{n}\E\norm{\sum\limits_{i=1}^n\epsilon_i\zz_i}_{p^{*}}&\le 	\frac{m_1^{\frac{1}{p^{*}}}}{n}\E\norm{\sum\limits_{i=1}^n\epsilon_i\zz_i}_{\infty}\nonumber\\
	&\le\frac{m_1^{\frac{1}{p^{*}}}\sqrt{2\log(2m_1)}}{n}\sup_{j}\norm{\zz[j]}_2\label{eqn:lemma:massart}\\
	&\le\frac{m_1^{\frac{1}{p^{*}}}\sqrt{2\log(2m_1)}}{n}\sqrt{n}\sup_{j}\norm{\zz[j]}_\infty\nonumber\\
	&\le \frac{m_1^{\frac{1}{p^{*}}}\sqrt{2\log(2m_1)}}{\sqrt{n}}\nonumber
	\end{align}
	\endgroup
	The step in Equation \eqref{eqn:lemma:massart} follows from Lemma \ref{lemma:massart}.
\end{proof}

\subsection{Proof of Lemma \ref{lemma:rader:p2}}
\begin{proof}
	The proof is based on the ideas of \cite[Lemma 17]{neyshabur2015norm}
	
	The right hand side (RHS) is always less than or equal to the left hand side (LHS), since given any vector $\vv$ we could create a corresponding matrix $\VV$ of which each row is $\vv$. 
	
	Then we will show that (LHS) is always less than or equal to (RHS). Let $\VV[,j]$ be the $j$th column of the matrix $\VV$. We have $\norm{\VV}_{p,p^{*}}\le \norm{\VV}_{p,q}$ when $q\le p^{*}$ and by H\"older's inequality, $\norm{\VV}_{p,p^{*}}\le s_2^{[\frac{1}{p^{*}}-\frac{1}{q}]}\norm{\VV}_{p,q}$ when $q>p^{*}$. Thus 
	\begingroup
	\allowdisplaybreaks
	\begin{align*}
	\rm{(LHS)}&\le \sup_{\VV\in \R^{s_1\times s_2 }}\frac{s_2^{[\frac{1}{p^{*}}-\frac{1}{q}]_+} }{\norm{\VV}_{p,p^{*}}}\norm{\sum\limits_{i=1}^n\epsilon_i \si\circ\left(\VV^Tg(\x_i)\right) }_{p^{*}}\\
	&=s_2^{[\frac{1}{p^{*}}-\frac{1}{q}]_+}\sup_{\VV\in \R^{s_1\times s_2
	}}\frac{1 }{\norm{\VV}_{p,p^{*}}}\left(\sum\limits_{j=1}^{s_2}\left\lvert\sum\limits_{i=1}^n\epsilon_i \si\left(\dotprod{\VV[,j]}{g(\x_i)}\right)\right\rvert^{p^{*}}\right)^{1/p^{*}}\\
	&\le s_2^{[\frac{1}{p^{*}}-\frac{1}{q}]_+}\sup_{\VV\in \R^{s_1\times s_2 }}\frac{1 }{\norm{\VV}_{p,p^{*}}}\left(\sum\limits_{j=1}^{s_2}\left(\norm{\VV[,j]}_p\frac{\rm{(RHS)}}{s_2^{[\frac{1}{p^{*}}-\frac{1}{q}]_+}}\right)^{p^{*}}\right)^{1/p^{*}}\\
	&=\rm{(RHS)}\sup_{\VV\in \R^{s_1\times s_2 }}\frac{1 }{\norm{\VV}_{p,p^{*}}}\left(\sum\limits_{j=1}^{s_2}(\norm{\VV[,j]}_p)^{p^{*}}\right)^{1/p^{*}}\\
	&=\rm{(RHS)}
	\end{align*}
	\endgroup
\end{proof}

\subsection{Proposition \ref{prop:lower}}
\begin{proof}
	Define $\N_{\gamma_{p,q}\le \gamma}^{k,\dd}$ as a function class that contains all functions representable by some neural network $f=T_{k+1}\circ\si\circ T_{k}\circ\cdots\circ\si\circ T_{1}\circ\x$  satisfying that \[\gamma_{p,q}=\prod\limits_{i=1}^{k+1}\norm{\WW_i}_{p,q}\le \gamma,\] where $\dd=(m_1,d,\cdots,d,1)$, $T_i(\uu)=\W_i^T\uu$,  and $\W_{i}\in \R^{d_{i-1}\times d_{i}}$ for $i=1,\cdots,k+1$. In order to use the conclusion of \cite[Theorem 3]{neyshabur2015norm} for DNNs with no bias neuron, it is sufficient to show that \[\N_{\gamma_{p,q}\le \gamma}^{k,\dd}\subseteq \N_{p,q,c,c_o}^{k,\dd},\] for any $c,c_o$ satisfying that $c^kc_o\ge  \gamma$.
	
		If any $\norm{T_i}_{p,q}=0$, then $f=0\in\N^{k,\dd}_{p,q,c,c_o}$. Otherwise, for any $c,c_o$ satisfying that $c^kc_o\ge\gamma\ge \prod\limits_{i=1}^{k+1}\norm{T_i}_{p,q}$,
	we rescale each hidden layer by \[s_i= c/\norm{T_i}_{p,q},\] that is, define $T_i^*$ by $\BB_i^*=0$ and  $\WW_i^*=s_i\WW_i$, such that $\norm{T_i^*}_{p,q}=c$ and $T_i^*=s_iT_i$. Correspondingly, rescale the output layer by $1/\prod\limits_{i=1}^ks_i$ and $\norm{T_{k+1}^*}_{p,q}\le c_o$ as $s_i\ge 1$. Therefore, $f\in\N_{p,q,c,c_o}^{k,\dd}$. 	
	\end{proof}
\section{Generalization Bounds}
In this section, we provide a generalization bound that holds for any data distribution for regression as an extension of Section  3. 
\paragraph{The Regression Problem.}
Assume that $(\x_1,y_1),\dots,(\x_n,y_n)$ are $n$ i.i.d samples on $\mathcal{X}\times\mathcal{Y}\subseteq\mathbb{R}^{m_1}\times\R$, satisfying that 
\begin{equation}
\label{general}
y_i=f(\x_i)+\varepsilon_i,
\end{equation}
where $f: \mathcal{X} \to \Y\subseteq \R$ is an unknown function and $\varepsilon_i$ an independent noise. 
\subsection{Generalization Bounds}
Assume that $d : \Y\times \Y \to [0,1]$ is a 1-Lipschitz function related to the prediction problem. For example, we could define $d(y,y') = \min(1, (y-y')^2/2)$. Let $\zz=(\x,y)\in \Z$, where $\Z=\X\times \Y$. Furthermore, for each $f\in \mathcal{N}_{p,q,c,c_o}^{k,\dd}$, define a corresponding $h_f$ such that $h_f(\zz)= d(y,f(\x))$. Let $\mathcal{H}_{p,q,c,c_o}^{k,\dd}$ be a hypothesis class satisfying 
\begin{equation*}
\mathcal{H}_{p,q,c,c_o}^{k,\dd}=\bigcup\limits_{f\in \mathcal{N}_{p,q,c,c_o}^{k,\dd}}h_f.
\end{equation*}

For every $h \in\mathcal{H}_{p,q,c,c_o}^{k,\dd}$, define the true and empirical risks as 
\begin{equation*}
\E_{\D} [ h ]  =\E_{\zz\sim \D} [ h(\zz) ],  \quad
\Eh_{S}[ h ]  =\frac{1}{n}\sum\limits_{i=1}^nh(\zz_i).
\end{equation*}
\begin{theorem} \label{thm:radunifconv:reg:s1}
	Let $\zz=(\x,y)$ be a random variable of support $\Z$ and distribution $\D$.
	Let ${S = \{\zz_1 \dots \zz_n\}}$ be a dataset of $n$ i.i.d. samples drawn from $\D$.
	Fix ${\delta \in (0,1)}$, $k\in [0,\infty)$ and $d_i\in\mathbb{N}_+$ for $i=1,\cdots,k$. With probability at least ${1-\delta}$ over the choice of $S$, 
	\begin{enumerate}[label=(\alph*)]
		\item \label{a} for $p=1$ and $q\in [1,\infty]$, we have $\forall h \in \mathcal{H}_{1,q,c,c_o}^{k,\dd}$:
		\begin{align*}
		\E_{\D} [ h] &\le \Eh_S [ h] +\sqrt{\frac{\log(1/\delta)}{2n}}+\frac{2c_o}{\sqrt{n}}*\min\left( 2\max(1,c^k)\sqrt{k+2+\log(m_1+1)},\right.\\&\left.\sqrt{(k+1)\log 16}\sum\limits_{i=0}^kc^i+c^k(\sqrt{2\log(2m_1)}+\sqrt{(k+1)\log 16})\right)
		\end{align*}
		\item \label{b}	for $p\in(1,2]$ and $q\in [1,\infty]$,  we have $\forall h \in \mathcal{H}_{p,q,c,c_o}^{k,\dd}$:
		\begin{align*}
		\E_{\D} [ h] &\le \Eh_S [ h] + \sqrt{\frac{\log(1/\delta)}{2n}}+\frac{1}{\sqrt{n}}c_oc^{k}\prod\limits_{i=1}^kd_i^{[\frac{1}{p^*}-\frac{1}{q}]_+}\sqrt{2\log(2m_1)}m_1^{\frac{1}{p^{*}}}+\\&	c_o\sqrt{\frac{(k+1)\log 16}{n}}\left(\sum\limits_{i=1}^{k+1}c^{k-i+1}\prod\limits_{l=i}^kd_l^{[\frac{1}{p^*}-\frac{1}{q}]_+}+
		m_1^{\frac{1}{p^*}}c^{k}\prod\limits_{i=1}^kd_i^{[\frac{1}{p^*}-\frac{1}{q}]_+}\right).
		\end{align*}
		\item \label{c} for $p\in(2,\infty)$ and $q\in [1,\infty]$,  we have $\forall h \in \mathcal{H}_{p,q,c,c_o}^{k,\dd}$:
		\begin{align*}
		\E_{\D} [ h] &\le \Eh_S [ h] + \sqrt{\frac{\log(1/\delta)}{2n}}+\\&\frac{1}{\sqrt{n}}c_oc^{k}\prod\limits_{i=1}^kd_i^{[\frac{1}{p^*}-\frac{1}{q}]_+}m_1^{\frac{1}{p^{*}}}\min\left((\sqrt{p^{*}-1}, \sqrt{2\log(2m_1)} \right)+\\
		&c_o\sqrt{\frac{(k+1)\log 16}{n}}\left(\sum\limits_{i=1}^{k+1}c^{k-i+1}\prod\limits_{l=i}^kd_l^{[\frac{1}{p^*}-\frac{1}{q}]_+}+
		m_1^{\frac{1}{p^*}}c^{k}\prod\limits_{i=1}^kd_i^{[\frac{1}{p^*}-\frac{1}{q}]_+}\right).
		\end{align*}
	\end{enumerate}
\end{theorem}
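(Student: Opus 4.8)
I would derive all three bounds from the uniform-convergence inequality of Theorem~\ref{thm:radunifconv} together with the Rademacher complexity estimates of Section~\ref{sec:rader}, linking the two by a contraction argument. Since $d$ takes values in $[0,1]$, every $h_f\in\mathcal{H}_{p,q,c,c_o}^{k,\dd}$ maps $\Z$ into $[0,1]$, so Theorem~\ref{thm:radunifconv} (with $a=0$) gives, with probability at least $1-\delta$ over $S$, simultaneously for all $h\in\mathcal{H}_{p,q,c,c_o}^{k,\dd}$,
\[
\E_\D[h]\ \le\ \Eh_S[h]\ +\ 2\,\Rademacher_n\!\big(\mathcal{H}_{p,q,c,c_o}^{k,\dd}\big)\ +\ \sqrt{\tfrac{\log(1/\delta)}{2n}},
\]
so it remains to control $\Rademacher_n(\mathcal{H}_{p,q,c,c_o}^{k,\dd})$.

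Next I would show $\Rademacherh_S(\mathcal{H}_{p,q,c,c_o}^{k,\dd})\le\Rademacherh_S(\N_{p,q,c,c_o}^{k,\dd})$ for every fixed sample $S=\{\zz_1,\dots,\zz_n\}$ with $\zz_i=(\x_i,y_i)$. Decompose $h_f(\zz_i)=d(y_i,f(\x_i))=\psi_i(f(\x_i))+d(y_i,0)$, where $\psi_i(t):=d(y_i,t)-d(y_i,0)$ is $1$-Lipschitz with $\psi_i(0)=0$. The offsets $d(y_i,0)$ are independent of $f$, so $\E_\epsilon[\tfrac1n\sum_i\epsilon_i d(y_i,0)]=0$ and they drop out of the empirical Rademacher complexity; the coordinatewise Ledoux--Talagrand contraction principle (Lemma~\ref{lem:ledouxtalagrand} applied to the $1$-Lipschitz maps $\psi_i$ vanishing at the origin, with the convex increasing function taken to be the identity) then gives the claimed inequality. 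Taking expectation over $S$ yields $\Rademacher_n(\mathcal{H}_{p,q,c,c_o}^{k,\dd})\le\Rademacher_n(\N_{p,q,c,c_o}^{k,\dd})$.

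Finally, the bounds on $\Rademacherh_S(\N_{p,q,c,c_o}^{k,\dd})$ in Propositions~\ref{prop:RCp1} and~\ref{prop:RCp2} are uniform over all samples $S\subseteq\X=[-1,1]^{m_1}$, hence they also bound $\Rademacher_n(\N_{p,q,c,c_o}^{k,\dd})=\E_{S}[\Rademacherh_S(\N_{p,q,c,c_o}^{k,\dd})]$. Plugging these into the display above proves the three cases: Proposition~\ref{prop:RCp1} for part~\ref{a} ($p=1$), Proposition~\ref{prop:RCp2} case~(a) for part~\ref{b} ($p\in(1,2]$), and Proposition~\ref{prop:RCp2} case~(b) for part~\ref{c} ($p\in(2,\infty)$). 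In each case one regroups the summands and, where convenient, relaxes the minimum $\min(\sqrt{p^*-1},\sqrt{2\log(2m_1)})$ and the $\sqrt{k}$ versus $\sqrt{k+1}$ factors to match the stated expressions.

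The only delicate step is the contraction: because the loss at the $i$th point depends on $y_i$, Lemma~\ref{lem:ledouxtalagrand} cannot be applied with a single Lipschitz $\phi$, so one must use its per-coordinate version for Lipschitz functions vanishing at $0$, together with the observation that the $f$-independent shifts $d(y_i,0)$ contribute nothing to the Rademacher complexity. Everything else is a routine substitution of the Section~\ref{sec:rader} estimates into the uniform-convergence bound.
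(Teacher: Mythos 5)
Your proof follows the same route as the paper's own: apply Theorem~\ref{thm:radunifconv}, pass from $\mathcal{H}_{p,q,c,c_o}^{k,\dd}$ to $\mathcal{N}_{p,q,c,c_o}^{k,\dd}$ by contraction, and then invoke the sample-uniform bounds of Propositions~\ref{prop:RCp1} and~\ref{prop:RCp2}. If anything, your treatment of the contraction step is more careful than the paper's, which cites Lemma~\ref{lem:ledouxtalagrand} directly even though that lemma is stated for a single Lipschitz $\phi$ with $\phi(0)=0$; your per-coordinate decomposition $d(y_i,t)=\psi_i(t)+d(y_i,0)$ with $\psi_i(0)=0$ is the right way to make that step rigorous.
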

The corollary below gives a generalization bound for the $L_{1,\infty}$ WN-DNNs. 

\begin{corollary}
	Let $\zz=(\x,y)$ be a random variable of support $\Z$ and distribution $\D$.
	Let ${S = \{\zz_1 \dots \zz_n\}}$ be a dataset of $n$ i.i.d. samples drawn from $\D$.
	Fix ${\delta \in (0,1)}$, $k\in [0,\infty)$  and  $d_i\in\mathbb{N}_+$ for $i=1,\cdots,k$. Assume that $c^k\le a_0$ for some $a_0\ge 1$.  With probability at least ${1-\delta}$ over the choice of $S$, 	for any $h \in \mathcal{H}_{p,q,c,c_o}^{k,\dd}$, we have:
		\begin{align*}
	\E_{\D} [ h]&\le \Eh_S [ h] +\sqrt{\frac{\log(1/\delta)}{2n}}+\frac{4c_oa_0}{\sqrt{n}}\sqrt{k+2+\log(m_1+1)}.
	\end{align*}
\end{corollary}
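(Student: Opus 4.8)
The plan is to derive the corollary directly from part~\ref{a} of Theorem~\ref{thm:radunifconv:reg:s1}, which already delivers a high-probability generalization inequality for the relevant hypothesis class (the $L_{1,\infty}$-type family, i.e. $p=1$); the corollary is simply the specialization of that bound under the extra hypothesis $c^k\le a_0$.

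First I would invoke Theorem~\ref{thm:radunifconv:reg:s1}, part~\ref{a}: with probability at least $1-\delta$ over the draw of $S$, every $h\in\mathcal{H}_{1,q,c,c_o}^{k,\dd}$ satisfies
\[
\E_{\D}[h]\le \Eh_S[h]+\sqrt{\frac{\log(1/\delta)}{2n}}+\frac{2c_o}{\sqrt{n}}\,\min\!\left(2\max(1,c^k)\sqrt{k+2+\log(m_1+1)},\ \cdots\right),
\]
where $\cdots$ denotes the second (width-sensitive) branch that we will discard. Since $\min(A,B)\le A$, I keep only the first argument of the minimum, which bounds the complexity term by $\tfrac{4c_o\max(1,c^k)}{\sqrt{n}}\sqrt{k+2+\log(m_1+1)}$. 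Finally, the assumption $c^k\le a_0$ together with $a_0\ge 1$ gives $\max(1,c^k)\le a_0$, so this term is at most $\tfrac{4c_o a_0}{\sqrt{n}}\sqrt{k+2+\log(m_1+1)}$, which is exactly the claimed estimate; combining the three summands yields the stated inequality.

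An equivalent route, if one prefers to avoid quoting Theorem~\ref{thm:radunifconv:reg:s1}, is to run the argument from scratch: apply the uniform-convergence bound of Theorem~\ref{thm:radunifconv} (valid because $d$ takes values in $[0,1]$), use the contraction Lemma~\ref{lem:ledouxtalagrand} to pass from $\Rademacher_n(\mathcal{H}_{1,q,c,c_o}^{k,\dd})$ to $\Rademacher_n(\N_{1,q,c,c_o}^{k,\dd})$, and then insert the first branch of the minimum from Proposition~\ref{prop:RCp1}, closing with the same estimate $\max(1,c^k)\le a_0$. Either way there is essentially no obstacle here; the only point needing the slightest care is to select the $L_{1,\infty}$-type branch $2\max(1,c^k)\sqrt{k+2+\log(m_1+1)}$ of the minimum rather than the other branch (whose $\sum_{i=0}^k c^i$ factor is \emph{not} controlled by $a_0$ alone), and to track the factor $2$ coming from the $2\Rademacher_n$ term so that the final constant is $4$ rather than $2$.
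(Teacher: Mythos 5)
Your derivation is correct and is essentially the paper's own (the paper states the corollary as an immediate consequence of Theorem~\ref{thm:radunifconv:reg:s1}\ref{a}): keep the first branch of the minimum and bound $\max(1,c^k)\le a_0$, which yields precisely the constant $4c_oa_0$. Your remark about discarding the second branch (whose $\sum_{i=0}^k c^i$ factor is not controlled by $a_0$) is exactly the right point of care, and your implicit restriction to $p=1$ matches the paper's intent, since the surrounding text identifies this as the $L_{1,\infty}$ case.
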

For instance, we could define $c$ as $1+\frac{v_0}{k}$ with some constant $v_0\ge 0$ for ResNet \cite{he2016deep}, then $c(k)^k\le e^{v_0}$. The case with $v_0=0$ leads to a specific case where the normalization constant $c=1$.
\subsection{Proof of Theorem \ref{thm:radunifconv:reg:s1}}
\begin{proof}
	By applying Theorem \ref{thm:radunifconv}, 
	with probability at least ${1-\delta}$ over the choice of $S$, $\forall h \in \mathcal{H}_{p,q,c,c_o}^{k,\dd}$, we have: \[\E_{\D} [h] -\Eh_{S} [h] \leq2 \Rademacher_{n}(\mathcal{H}_{p,q,c,c_o}^{k,\dd}) + \sqrt{\frac{\log{(1/\delta)}}{2n}}.
	\] 
	Thus it is equivalent to bound $\Rademacher_{n}(\mathcal{H}_{p,q,c,c_o}^{k,\dd})$ in order to bound the absolute value of the generalization error. By Lemma \ref{lem:ledouxtalagrand}, we have: \[\Rademacher_{n}(\mathcal{H}_{p,q,c,c_o}^{k,\dd})\le\Rademacher_{n}(\mathcal{N}_{p,q,c,c_o}^{k,\dd}). \]
	Finally, \ref{a} follows from \[\Rademacher_{n}(\mathcal{N}_{p,q,c,c_o}^{k,\dd})\le \sup_S\Rademacherh_S(\mathcal{N}_{p,q,c,c_o}^{k,\dd})\] and Proposition \ref{prop:RCp1}, while  \[\Rademacher_{n}(\mathcal{N}_{p,q,c,c_o}^{k,\dd})=\Rademacher_{n}(\mathcal{N}_{p,q,c,c_o}^{k,\dd})\le \sup_S\Rademacherh_S(\mathcal{N}_{p,q,c,c_o}^{k,\dd})\] and Proposition \ref{prop:RCp2} lead to \ref{b} and \ref{c}. 
\end{proof}
\section{Theorem \ref{thm:approx}}
\subsection{Proof of Lemma \ref{lemma:approx:con:deep}}
\begin{proof}
$\norm{(b_i,\w_{i}^T)}_1=1$ implies  $\norm{(b_i,2\w_{i}^T)}_1\le2$, thus by Theorem \ref{thm:nn:prop} Part  \ref{nn:prop:2}, it is sufficient to show that  g could be represented by some neural network in $\N_{p,q,wid^{1/q}_k,c_o}^{k,\dd^k}$ if instead  $\norm{(b_i,2\w_{i}^T)}_1=1$. In addition,  by Theorem \ref{thm:nn:prop} Parts \ref{nn:prop:2}, \ref{nn:prop:norm} and \ref{nn:prop:widendeepen}, it is equivalent to show that when $\sum\limits_{i=1}^r |c_i|=1$, g could be represented by some neural network in $\N_{1,\infty,1,1}^{k,\dd}$  where $d_i\le [r/k]+2m_1+3$ for $i=1,\cdots,k$.

Decompose the shallow neural network as
\begin{align*}
g(\x)&=\left(\sum\limits_{i=1}^{r_1}c^+_i \right)g_+(\x)-\left(\sum\limits_{i=1}^{r_2}c^-_i\right)g_-(\x),
\end{align*} where \[g_+(\x)=\sum\limits_{i=1}^{r_1}c_i^+\si\left((\w_i^+)^T\x+b_i^+\right)/\sum\limits_{i=1}^{r_1}c^+_i,\quad  g_-(\x)=\sum\limits_{i=1}^{r_2}c_i^-\si\left((\w_i^-)^T\x+b_i^-\right)/\sum\limits_{i=1}^{r_2}c^-_i\] for some $c_i^+, c_i^->0$. Note that  
$\norm{\alpha^T A^T}_{1}\le 1$ if $\alpha\in \R^s$ satisfies that  $\norm{\alpha}_1\le 1$, and  $A\in \R^{t\times s}$ satisfies that  $\norm{A}_{1,\infty}\le 1$. Additionally \[\sum\limits_{i=1}^{r_1} c^+_i+\sum\limits_{i=1}^{r_2} c^-_i=\sum\limits_{i=1}^r |c_i|=1. \]
Thus it is sufficient to show that \[\left( g_+(\x),g_-(\x) \right)\] could be represented by some neural network in $ \N_{1,\infty,1,1}^{k,\dd}$, where each hidden layer contains both $\si\circ\x$ and $\si\circ(-\x)$, while satisfying that $d_i\le [r_1/k]+[r_2/k]+2m_1+2$  for $i=1,\cdots,k$ and $d_{k+1}=2$.

When $k=1$,  it is trivial.

When $k=2$, we construct the first hidden layer consisting of $[r_1/2]+[r_2/2]+2m_1$ hidden neurons:  \[\{ (\w_i^+)^T\x+b_i^+: i=1,\cdots,[r_1/2] \},\{ (\w_i^-)^T\x+b_i^-: i=1,\cdots,[r_2/2] \}, \x, -\x.\] For the second hidden layer, there are $2+r-([r_1/2]+[r_2/2])+2m_1$ hidden neurons.  The first neuron  \[\eta_1=\sum\limits_{i=1}^{[r_1/2]}c_i^+\si\left((\w_i^+)^T\x+b_i^+\right)/\sum\limits_{i=1}^{[r_1/2]}c^+_i,\] the second neuron  \[\eta_2=\sum\limits_{i=1}^{[r_2/2]}c_i^-\si\left((\w_i^-)^T\x+b_i^-\right)/\sum\limits_{i=1}^{[r_2/2]}c^-_i,\] then follows $\si\circ\x$ , $\si\circ(-\x)$ and the left $r-([r_1/2]+[r_2/2])$ hidden neurons \[\{\eta_i^+=(\w_i^+)^T\si\circ\x-(\w_i^+)^T\si\circ(-\x)+b^+_i: i=[r_1/2]+1,\cdots,r_1\},\]\[\{\eta_i^-=(\w_i^-)^T\si\circ\x-(\w_i^-)^T\si\circ(-\x)+b^-_i: i=[r_2/2]+1,\cdots,r_2\}.\] The output layer only contains two hidden neurons $(g_+,g_-)$, which could be computed respectively by  \begin{align*}&\frac{\sum\limits_{i=1}^{[r_1/2]}c_i^+}{\sum\limits_{i=1}^{r_1}c_i^+}\si(\eta_1)+\sum\limits_{i=[r_1/2]+1}^{r_1}\frac{c_i^+}{\sum\limits_{i=1}^{r_1}c_i^+}\si(\eta_i^+)\quad \text{and}\quad \frac{\sum\limits_{i=1}^{[r_2/2]}c_i^-}{\sum\limits_{i=1}^{r_2}c_i^-}\si(\eta_2)+\sum\limits_{i=[r_2/2]+1}^{r_2}\frac{c_i^-}{\sum\limits_{i=1}^{r_2}c_i^-}\si(\eta_i^-).
\end{align*}
Thus, we find a neural network in $\N_{1,\infty,1,c_o}^{2,\dd}$ representing $(g_+,g_-)$, where $d_i\le [r_1/2]+[r_2/2]+2m_1+2$.

When $k=K$, define $r_1^*=(K-1)[r_1/K], r_2^*=(K-1)[r_2/K], r^*=r_1+r_2$ and  \[g^*(\x)=(g^*_+(\x),g^*_-(\x))=\left(\frac{1}{\sum\limits_{i=1}^{r^*_1}c_i^+}\sum\limits_{i=1}^{r^*_1}c_i^+\si\left((\w_i^+)^T\x+b_i^+\right),\frac{1}{\sum\limits_{i=1}^{r^*_2}c_i^-}\sum\limits_{i=1}^{r^*_2}c_i^-\si\left((\w_i^-)^T\x+b_i^-\right)\right).\] By induction assumption, $g^*$ could be represented $h^* \in\N_{1,\infty,1,1}^{K-1,\dd^*}$, where $d_i^*\le [r_1^*/(K-1)]+[r_2^*/(K-1)]+2m_1+2$. In order to construct a WN-DNN representing $(g_+,g_-)$, we keep the first $K-1$ hidden layers of $h^*$ and build the $K$th hidden layer based on the output layer of $h^*$. 
Since the $(K-1)$th hidden layer contains both $\si\circ \x$ and $\si\circ(-\x)$. Thus except the original two neurons, we could add \[\{(\w_i^+)^T(\si\circ\x-\si\circ(-\x))+b^+_i: i=r_1^*+1,\cdots,r_1\},\]\[\{(\w_i^-)^T(\si\circ\x-\si\circ(-\x))+b_i^-: i=r_2^*+1,\cdots,r_2\},\si\circ\x),\si\circ(-\x)\] to the $K$th hidden layer. Note that $\norm{(b_i,2\w_i^T)}_1=1$, thus we does not increase the $L_{1,\infty}$ norm of the $K$th transformation by adding these neurons. 

We finally construct the output layer by 
\begin{align*}&\frac{\sum\limits_{i=1}^{r_1^*}c_i^+}{\sum\limits_{i=1}^{r_1}c_i^+}\si(g_+^*(\x))+\sum\limits_{i=r_1^*+1}^{r_1}\frac{c_i^+}{\sum\limits_{i=1}^{r_1}c_i^+}\si\left((\w_i^+)^T\x+b_i^+\right),\\
&\frac{\sum\limits_{i=1}^{r_2^*}c_i^-}{\sum\limits_{i=1}^{r_2}c_i^-}\si(g_-^*(\x))+\sum\limits_{i=r_2^*+1}^{r_2}\frac{c_i^-}{\sum\limits_{i=1}^{r_2}c_i^-}\si\left((\w_i^-)^T\x+b_i^-\right).
\end{align*}

Thus, we build a neural network in $\N_{1,\infty,1,1}^{K,\dd}$ representing $(g_+,g_-)$.  The width of the $i$th hidden layer $d_i\le [r_1/K]+[r_2/K]+2m_1+3$.
\end{proof}
\subsection{Proof for Theorem \ref{thm:approx}}
\begin{proof}
	Assume $f$ is an arbitrary function defined on $\R^{m_1}\to \R$, satisfying that $\norm{\x_1}_{\infty}\le 1$, $\norm{\x_2}_{\infty}\le 1$,  $f(\x_1) \le L$ and $\lvert f(\x_1)-f(\x_2)\rvert \le L\norm{\x_1-\x_2}_{\infty}$. Following \cite[Propositions 1 \& 6]{bach2017breaking}, for $c_o$ greater than a constant depending only on $m_1$,  a fixed $\gamma >0$, , there exists some function  $h(\x):\R^{m_1}\to\R=\sum\limits_{i=1}^rc_i\si (\w_i^T\x+b_i)$, satisfying that $\sum\limits_{i=1}^r |c_i|\le c_o$, $\norm{(b_i,\w_i^T)}_1=1$ and $r\le c_2(m_1)\gamma^{-\frac{2(m_1+1)}{m_1+4}}$, such that \[
	\sup\limits_{\norm{\x}_\infty\le 1}\vert f(\x)-h(\x)\rvert\le c_o\gamma+c_1(m_1)L(\frac{c_o}{L})^{-\frac{2}{m_1+1}}\log\frac{c_o}{L},\]
	where $c_1(m_1)$ and $c_2(m_1)$ are some constants depending only on $m_1$.  
	
	By taking $\gamma= c_1(m_1)(c_o/L)^{-1-2/(m_1+1)}\log\frac{c_o}{L}$, we have some function  $h(\x)=\sum\limits_{i=1}^rc_i\si (\w_i^T\x+b_i)$, satisfying that $\sum\limits_{i=1}^r |c_i|\le c_o$, $\norm{(b_i,2\w_i^T)}_1=1$ and \[r\le C_r(m_1)(\log \frac{c_o}{L})^{-2(m_1+1)/(m_1+4)}\left(\frac{c_o}{L}\right)^{2(m_1+3)/(m_1+4)},\]
	such that  \[
	\sup\limits_{\norm{\x}_\infty\le 1}\vert f(\x)-h(\x)\rvert\le C(m_1)L(\frac{c_o}{L})^{-\frac{2}{m_1+1}}\log\frac{c_o}{L},\]
	where $C_r(m_1)$ and $C(m_1)$ denote some constants that depend only on $m_1$.

	By Lemma \ref{lemma:approx:con:deep}, for any integer $k\in[1,r] $, this $h$ could be represented by a neural network in  $\N_{p,\infty,1,c_o}^{k,\dd^k}$, where 
	$\dd^k_0=m_1$, $\dd^k_i=[r/k]+2m_1+3$ for $i=1,\cdots,k$ and $\dd^k_{k+1}=1$.
\end{proof}

\end{document}